\documentclass{article}

\PassOptionsToPackage{round}{natbib}
\usepackage{natbib}
\usepackage[dvipsnames]{xcolor}

\usepackage{preprint,times}

\usepackage[utf8]{inputenc} 
\usepackage[T1]{fontenc}    
\usepackage{hyperref}       
\usepackage{url}            
\usepackage{booktabs}       
\usepackage{amsfonts}       
\usepackage{nicefrac}       
\usepackage{microtype}      

\usepackage{algorithm}
\usepackage{algpseudocode}
\usepackage{amssymb,amsmath,amstext,amstext,amsthm}
\usepackage{bm}
\usepackage{bbm}
\usepackage{caption}
\usepackage{graphicx}
\usepackage{enumitem}
\usepackage[labelformat=simple]{subcaption}
\usepackage{wrapfig}

\usepackage{adjustbox}
\usepackage{diagbox}

\hypersetup{
	colorlinks=true,%
	citecolor={blue!50!black},
	linkcolor={red!50!black},
	urlcolor={green!50!black}
}

\makeatletter
\renewcommand{\ALG@name}{}
\makeatother

\newtheorem{theorem}{Theorem}
\newtheorem{lemma}{Lemma}
\newtheorem{corollary}{Corollary}

\def \fanshu{\futurelet\next\fanshuAux}
\def \fanshuAux{\ifx\next[
	\expandafter \fanshuOpt
	\else
	\expandafter \fanshuNoOpt
	\fi
}
\def \fanshuOpt[#1]#2{{ \left\Vert #2 \right\Vert}_{#1}}
\def \fanshuNoOpt#1{ { \left\Vert #1 \\right\Vert} }

\title{MA2QL: A Minimalist Approach to Fully Decentralized Multi-Agent Reinforcement Learning}

\iclrfinalcopy

\author{%
	Kefan Su$^1$ \ \  Siyuan Zhou$^{1}$ \ \ Jiechuan Jiang$^{1}$ \ \ Chuang Gan$^2$ \ \ Xiangjun Wang$^3$ \ \ Zongqing Lu$^1$ \\ \\
	$^1$Peking University \quad  $^2$MIT-IBM Watson AI Lab \quad $^3$inspir.ai \\
}

\begin{document}

	\maketitle
	
	\begin{abstract}
		Decentralized learning has shown great promise for cooperative multi-agent reinforcement learning (MARL). However, non-stationarity remains a significant challenge in fully decentralized learning. In the paper, we tackle the non-stationarity problem in the simplest and fundamental way and propose \textit{multi-agent alternate Q-learning} (MA2QL), where agents take turns updating their Q-functions by Q-learning. MA2QL is a \textit{minimalist} approach to fully decentralized cooperative MARL but is theoretically grounded. We prove that when each agent guarantees $\varepsilon$-convergence at each turn, their joint policy converges to a Nash equilibrium. In practice, MA2QL only requires minimal changes to independent Q-learning (IQL). We empirically evaluate MA2QL on a variety of cooperative multi-agent tasks. Results show MA2QL consistently outperforms IQL, which verifies the effectiveness of MA2QL, despite such minimal changes.
	\end{abstract}

	\section{Introduction}
	
	Cooperative multi-agent reinforcement learning (MARL) is a well-abstracted model for a broad range of real applications, including logistics \citep{li2019cooperative}, traffic signal control \citep{xu2021hierarchically}, power dispatch \citep{wang2021multiagent}, and inventory management \citep{feng2022multiagent}. In cooperative MARL, centralized training with decentralized execution (CTDE) is a popular learning paradigm, where the information of all agents can be gathered and used in training. Many CTDE methods \citep{lowe2017multi,foerster2018COMA,sunehag2018vdn,rashid2018qmix,wang2020qplex,zhang2021fop,su2022divergence,li2022difference,wang2022more} have been proposed and shown great potential to solve cooperative multi-agent tasks. 
	
	Another paradigm is decentralized learning, where each agent learns its policy based on only local information. Decentralized learning is less investigated but desirable in many scenarios where the information of other agents is not available, and for better robustness, scalability, and security \citep{Zhang2019multi}. However, \textit{fully decentralized learning} of agent policies (\textit{i.e.}, without communication) is still an open challenge in cooperative MARL. 
	
	The most straightforward way for fully decentralized learning is directly applying independent learning at each agent \citep{tan1993multi}, which however induces the well-known non-stationarity problem for all agents \citep{Zhang2019multi} and may lead to learning instability and a non-convergent joint policy, though the performance varies as shown in empirical studies \citep{rashid2018qmix,de2020independent,papoudakis2021benchmarking,yu2021surprising}.
	
	In the paper, we directly tackle the non-stationarity problem in the simplest and fundamental way, \textit{i.e.}, fixing the policies of other agents while one agent is learning. Following this principle, we propose \textbf{\textit{multi-agent alternate Q-learning (MA2QL)}}, a \textit{minimalist} approach to fully decentralized cooperative multi-agent reinforcement learning, where agents take turns to update their policies by Q-learning. MA2QL is theoretically grounded and we prove that when each agent guarantees $\varepsilon$-convergence at each turn, their joint policy converges to a Nash equilibrium. In practice, MA2QL only requires minimal changes to independent Q-learning (IQL) \citep{tan1993multi,tampuu2015multiagent} and also independent DDPG \citep{lillicrap2015continuous} for continuous action, \textit{i.e.}, simply swapping the order of two lines of codes as follows.

	\begin{figure}[h]
		\centering
		\begin{minipage}{0.42\textwidth}
				\centering
				\captionsetup{labelformat=empty}
				\caption{\textbf{IQL}}
				\vspace{-0.2cm}
				\footnotesize
				\begin{algorithmic}[1]
					\Repeat
					\color{PineGreen}\State all agents interact in the environment
					\color{Salmon}\For{$i\gets 1, n$}
					\color{black}\State agent $i$ updates by Q-learning
					\EndFor
					\Until terminate
				\end{algorithmic}
		\end{minipage}
		\hspace{0.4cm}
		\begin{minipage}{0.44\textwidth}
			\centering
			\captionsetup{labelformat=empty}
			\caption{\textbf{MA2QL}}
			\vspace{-0.2cm}
			\footnotesize
			\begin{algorithmic}[1]
				\Repeat
				\color{Salmon}\For{$i\gets 1, n$}
				\color{PineGreen}\State all agents interact in the environment
				\color{black}\State agent $i$ updates by Q-learning
				\EndFor
				\Until terminate
			\end{algorithmic}
		\end{minipage}
	\end{figure}
	
	We evaluate MA2QL on a didactic game to empirically verify its convergence, and multi-agent particle environments \citep{lowe2017multi}, multi-agent MuJoCo \citep{peng2021facmac}, and StarCraft multi-agent challenge \citep{samvelyan2019starcraft} to verify its performance with discrete and continuous action spaces, and fully and partially observable environments. We find that MA2QL consistently outperforms IQL, despite such minimal changes. This empirically verifies the effectiveness of alternate learning. The superiority of MA2QL over IQL suggests that simpler approaches may have been left underexplored for fully decentralized cooperative multi-agent reinforcement learning. We envision this work could provide some insights to further studies of fully decentralized learning. 
	
	\section{Background}
	\label{sec:back}
	
	\subsection{Preliminaries}

	\textbf{Dec-POMDP.} Decentralized partially observable Markov decision process (Dec-POMDP) is a general model for cooperative MARL. A Dec-POMDP is a tuple $M=\left\{S,A,P,Y,O,I,n,r,\gamma\right\}$. $S$ is the state space, $n$ is the number of agents, $\gamma \in [0,1)$ is the discount factor, and $I = \{1,2\cdots n\}$ is the set of all agents.  $A = A_1 \times A_2 \times \cdots \times A_n$ represents the joint action space where $A_i$ is the individual action space for agent $i$. $P(s^{\prime} |s,\bm{a} ): S \times A \times S \to [0,1]$ is the transition function, and $r(s,\bm{a} ): S \times A \to \mathbb{R}$ is the reward function of state $s$ and joint action $\bm{a}$. $Y$ is the observation space, and $O(s,i):S \times I \to Y $ is a mapping from state to observation for each agent. The objective of Dec-POMDP is to maximize $J({\bm{\pi}}) = \mathbb{E}_{\bm{\pi}}\left[ \sum_{t = 0}^\infty \gamma^t r(s_t,\bm{a}_t ) \right],$ and thus we need to find the optimal joint policy ${\bm{\pi}}^{*} = \arg\max_{{\bm{\pi}}} J({\bm{\pi}})$. To settle the partial observable problem, history $\tau_i \in \mathcal{T}_i: (Y \times A_i)^*$ is often used to replace observation $o_i \in Y$. Each agent $i$ has an individual policy $\pi_i(a_i|\tau_i)$ and the joint policy $\bm{\pi}$ is the product of each $\pi_i$. Though the individual policy is learned as $\pi_i(a_i|\tau_i)$ in practice, as Dec-POMDP is undecidable \citep{undecideDECPOMDP} and the analysis in partially observable environments is much harder, we will use $\pi_i(a_i|s)$ in analysis and proofs for simplicity.
	
	\textbf{Dec-MARL.} Although decentralized cooperative multi-agent reinforcement learning (Dec-MARL) has been previously investigated \citep{zhang2018fully,de2020independent}, the setting varies across these studies. In this paper, we consider Dec-MARL as a \textit{fully} decentralized solution to Dec-POMDP, where each agent learns its policy/Q-function from its own action individually \textit{\textbf{without communication or parameter-sharing}}. Therefore, in Dec-MARL, each agent $i$ actually learns in the environment with transition function $P_i(s'|s,a_i) = \mathbb{E}_{a_{-i} \sim \pi_{-i}}[P(s'|s,a_i,a_{-i})]$ and reward function $r_i(s,a_i) = \mathbb{E}_{a_{-i} \sim \pi_{-i}} [r(s,a_i,a_{-i})]$, where $\pi_{-i}$ and $a_{-i}$ respectively denote the joint policy and joint action of all agents expect $i$. As other agents are also learning (\textit{i.e.}, $\pi_{-i}$ is changing), from the perspective of each individual agent, the environment is non-stationary. This is the non-stationarity problem, the main challenge in Dec-MARL.
	
	\textbf{IQL.} Independent Q-learning (IQL) is a straightforward method for Dec-MARL, where each agent $i$ learns a Q-function $Q(s,a_i)$ by Q-learning. However, as all agents learn simultaneously, there is no theoretical guarantee on convergence due to non-stationarity, to the best of our knowledge. In practice, IQL is often taken as a simple baseline in favor of more elaborate MARL approaches, such as value-based CTDE methods \citep{rashid2018qmix, son2019qtran}. However, much less attention has been paid to IQL itself for Dec-MARL.

	\subsection{Multi-Agent Alternate Policy Iteration}
	
	To address the non-stationarity problem in Dec-MARL, a fundamental way is simply to make the environment stationary during the learning of each agent. Following this principle, we let agents learn by turns; in each turn, one agent performs policy iteration while fixing the policies of other agents. This procedure is referred to as \textit{multi-agent alternate policy iteration}. As illustrated in Figure~\ref{fig:api}, multi-agent alternate policy iteration differs from policy iteration in single-agent RL. In single-agent RL, policy iteration is performed on the same MDP. However, here, for each agent, policy iteration at a different round is performed on a different MDP. As $\pi_{-i}$ is fixed at each turn, $P_i(s'|s,a_i)$ and $r_i(s,a_i)$ are stationary and we can easily have the following lemma. 
	
	\begin{lemma}[multi-agent alternate policy iteration] \label{lemma-policy-iteration}
		If all agents take turns to perform policy iteration, their joint policy sequence $\{\bm{\pi}\}$ monotonically improves and converges to a Nash equilibrium. 
	\end{lemma}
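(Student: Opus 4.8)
The plan is to reduce the multi-agent claim to the well-known convergence of single-agent policy iteration by exploiting the fact that, at each turn, only one agent updates while all others are held fixed. First I would set up the per-turn MDP: when it is agent $i$'s turn, the policies $\pi_{-i}$ of all other agents are frozen, so from agent $i$'s perspective the environment is a genuine (stationary) single-agent MDP with transition kernel $P_i(s'|s,a_i)$ and reward $r_i(s,a_i)$ as defined in the Dec-MARL paragraph. The key observation I would make explicit is that the value of the joint policy $J(\bm{\pi})$ equals the value of agent $i$'s policy $\pi_i$ in this induced single-agent MDP, because the expectation over $a_{-i}\sim\pi_{-i}$ in $P_i$ and $r_i$ exactly reproduces the joint transition and reward.

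The main step is the monotonic improvement. When agent $i$ performs one round of policy iteration on its induced MDP, standard single-agent policy iteration (policy evaluation followed by greedy policy improvement) guarantees that its new policy $\pi_i'$ is at least as good as $\pi_i$ in that MDP; by the value-equivalence above, this means $J(\pi_i',\pi_{-i}) \ge J(\pi_i,\pi_{-i})$, i.e. the joint value does not decrease across the turn. Since every turn only replaces one agent's policy while keeping the others fixed, each turn weakly increases $J(\bm{\pi})$, so the sequence $\{J(\bm{\pi})\}$ is monotonically nondecreasing. Because the total reward is bounded (rewards are bounded and $\gamma<1$, so $J$ is bounded above by $r_{\max}/(1-\gamma)$), the monotone bounded sequence converges.

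The final step is characterizing the limit as a Nash equilibrium. I would argue that at convergence, no single-agent policy-improvement step can strictly increase $J$: if agent $i$ could improve its own induced value, policy iteration for agent $i$ would have produced a strictly better policy, contradicting convergence. Hence at the limit, each agent's policy is a best response to the fixed policies of the others in its induced MDP, which is exactly the definition of a Nash equilibrium for this cooperative game. The main obstacle I anticipate is the subtlety that convergence of the value sequence $\{J(\bm{\pi})\}$ does not immediately force convergence of the policies themselves, so some care is needed to argue that the stationarity of the value implies each agent is simultaneously best-responding; one clean route is to note that with finite state and action spaces there are finitely many deterministic greedy policies, so the monotone value sequence must stabilize at a joint policy that is a fixed point of every agent's policy-improvement operator, which is precisely a Nash equilibrium.
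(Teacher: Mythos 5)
Your proposal is correct and follows essentially the same route as the paper: freeze $\pi_{-i}$ so that agent $i$ faces a stationary induced MDP with kernel $P_i$ and reward $r_i$, use single-agent policy evaluation and greedy improvement to get per-turn monotone improvement of the joint value, and conclude convergence to a Nash equilibrium as a fixed point of every agent's improvement operator. Your closing observation about finitely many deterministic greedy policies actually tightens a step the paper leaves implicit (convergence of the value sequence does not by itself force convergence of the policies), so it is a refinement rather than a deviation.
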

	\begin{proof}
		In each turn, as the policies of other agents are fixed, the agent $i$ has the following update rule for policy evaluation,
		\begin{equation}
			Q_{\pi_i}(s,a_i) \leftarrow r_i(s,a_i) + \gamma \mathbb{E}_{s'\sim P_i,a_i' \sim \pi_i}[Q_{\pi_i}(s',a_i')].
		\end{equation}
		We can have the convergence of policy evaluation in each turn by the standard results \citep{sutton2018reinforcement}. Moreover, as $\pi_{-i}$ is fixed, it is straightforward to have 
		\begin{equation}
			Q_{\pi_i}(s,a_i) = \mathbb{E}_{a_{-i} \sim \pi_{-i}}[Q_{\bm{\pi}}(s,a,a_i)].
		\end{equation}
		
		Then, the agent $i$ performs policy improvement by
		\begin{equation} \label{cond1}
			\pi^{\operatorname{new}}_i(s) = \arg\max_{a_i} \mathbb{E}_{\pi^{\operatorname{old}}_{-i} }\left[ Q_{\bm{\pi}^{\operatorname{old}}}(s,a_i,a_{-i} ) \right].
		\end{equation}
		As the policies of other agents are fixed (i.e., $\pi^{\operatorname{new}}_{-i}=\pi^{\operatorname{old}}_{-i}$), we have
		\begin{equation}
			\begin{aligned}
				V_{\bm{\pi}^{\operatorname{old}}}(s) & = \mathbb{E}_{\bm{\pi}^{\operatorname{old}}}[Q_{\bm{\pi}^{\operatorname{old}}}(s,a_i,a_{-i})] = \mathbb{E}_{\pi_i^{\operatorname{old}}}\mathbb{E}_{\pi_{-i}^{\operatorname{old}}}[Q_{\bm{\pi}^{\operatorname{old}}}(s,a_i,a_{-i})] \\
				& \le \mathbb{E}_{\pi_i^{\operatorname{new}}}\mathbb{E}_{\pi_{-i}^{\operatorname{old}}}[Q_{\bm{\pi}^{\operatorname{old}}}(s,a_i,a_{-i}) ] = \mathbb{E}_{\pi_i^{\operatorname{new}}}\mathbb{E}_{\pi_{-i}^{\operatorname{new}}}[Q_{\bm{\pi}^{\operatorname{old}}}(s,a_i,a_{-i}) ] \\
				& = \mathbb{E}_{\bm{\pi}^{\operatorname{new}}}[Q_{\bm{\pi}^{\operatorname{old}}}(s,a_i,a_{-i}) ] = \mathbb{E}_{\bm{\pi}^{\operatorname{new}}}[r(s,a_i,a_{-i}) + \gamma V_{\bm{\pi}^{\operatorname{old}}}(s^\prime) ]\\
				& \le \cdots \le V_{\bm{\pi}^{\operatorname{new}}}(s),
			\end{aligned}
		\end{equation}
		where the first inequality is from \eqref{cond1}. This proves that the policy improvement of agent $i$ in each turn also improves the joint policy. Thus, as agents perform policy iteration by turn, the joint policy sequence $\{\bm{\pi}\}$ improves monotonically, and $\{\bm{\pi}\}$ will converge to a Nash equilibrium since no agents can improve the joint policy unilaterally at convergence. 
	\end{proof}
	
	\setcounter{figure}{0}
	\begin{figure}[t]
		\centering
		\includegraphics[width=.9\textwidth]{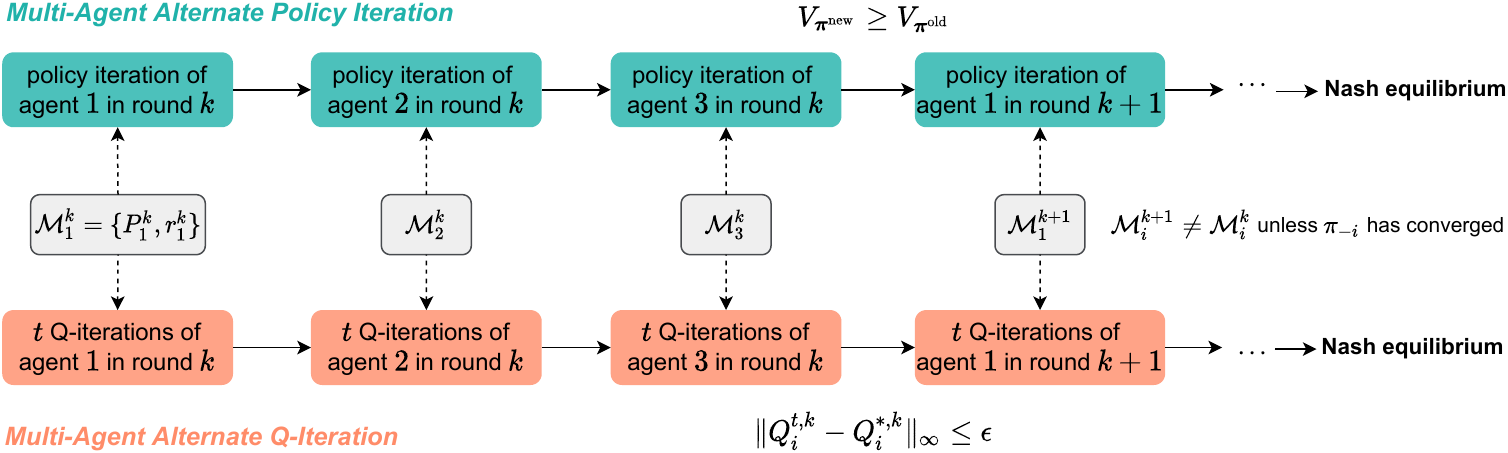}
		\caption{Illustration of \textit{multi-agent alternate policy iteration} (upper panel) and \textit{multi-agent alternate Q-iteration} (lower panel) of three agents. As essentially the MDP differs at different turns of each agent, policy iteration/Q-iteration of each agent iterates over different MDPs.}
		\label{fig:api}
	\end{figure}

	Lemma \ref{lemma-policy-iteration} is simple but useful, which is also reached independently and implicitly by existing studies in different contexts \citep{fang2019multi,Bertsekas2020}. Moreover, Lemma \ref{lemma-policy-iteration} immediately indicates an approach with the convergence guarantee for Dec-MARL and also tells us that if we find the optimal policy for agent $i$ in each round $k$ given the other agents' policies $\pi_{-i}^k$, then the joint policy will obtain the largest improvement. This result can be formulated as follows,
	\begin{equation}
		\label{eq:pimax}    
		\begin{aligned}
			&\pi^{*,k}_i = \arg\max_{\pi_i} \mathbb{E}_{\pi^{k}_{-i}}\left[ Q_{\pi_i,\pi_{-i}^{k}}(s,a_i,a_{-i} ) \right] \\
			&V_{\pi_i,{\pi}_{-i}^{k}}(s) \le V_{\pi^{*,k}_i,{\pi}_{-i}^k}(s) \quad \forall \pi_i, \forall s.
		\end{aligned}
	\end{equation}
	We could obtain this $\pi^{*,k}_i$ by policy iteration with many \textit{on-policy} iterations. However, such a method will face the issue of sample inefficiency which may be amplified in MARL settings. We will use Q-iteration to settle this problem as in \citet{fang2019multi,Bertsekas2020}. However, unlike these studies, we propose to truncate Q-iteration for fast learning but with the same theoretical guarantee.
	
	\section{Method}
	
	To address the problem of multi-agent alternate policy iteration, we propose \textit{multi-agent alternate Q-iteration}, which is sufficiently truncated for fast learning but still has the same theoretical guarantee. Further, based on multi-agent alternate Q-iteration, we derive \textit{multi-agent alternate Q-learning}, which makes the minimal change to IQL to form a simple yet effective value-based decentralized learning method for cooperative MARL. 
	
	\subsection{Multi-Agent Alternate Q-Iteration}
	
	Instead of policy iteration, we let agents perform Q-iteration by turns as depicted in Figure~\ref{fig:api}. Let $\mathcal{M}^k_i=\{P_i^k,r^k_i\}$ denote the MDP of agent $i$ in round $k$, where we have $\mathcal{M}^{k}_i \neq \mathcal{M}^{k-1}_i$ unless $\pi_{-i}$ has converged, and $Q^{t,k}_{i}(s,a_i)$ denote the Q-function of agent $i$ with $t$ updates in the round $k$. We define the Q-iteration as follows,
	\begin{align}\label{q-learning}
		Q^{t+1,k}_{i}(s,a_i) \leftarrow r^k_i(s,a_i) + \gamma \mathbb{E}_{s^\prime \sim P_i^k} \left[ \max_{a_i^\prime} Q^{t,k}_{i}(s^\prime,a^\prime_i) \right].
	\end{align}
	Then, the sequence $\{Q^{t,k}_{i}\}$ converges to $Q^{*,k}_{i}$ with respect to the MDP $\mathcal{M}_i^k=\{P_i^k,r_i^k\}$, and we have the following lemma and corollary.

	\begin{lemma}[$\varepsilon$-convergent Q-iteration] \label{lemma-varepsilon}
		By iteratively applying Q-iteration \eqref{q-learning} at each agent $i$ for each turn, for any $\varepsilon>0$, we have
		\begin{equation}
			\begin{aligned}
				\big\|{Q^{t,k}_{i} - Q^{*,k}_{i}}\big\|_\infty \le \varepsilon, \quad \text{when \,\,}  t \ge \frac{\log\left( (1- \gamma) \varepsilon \right) - \log (2R + 2\varepsilon)}{\log \gamma},
			\end{aligned}
		\end{equation}
		where $R = \frac{r_{\max}}{1-\gamma}$ and $r_{\max} = \max_{s,\bm{a}}r(s,\bm{a})$.
	\end{lemma}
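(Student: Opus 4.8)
The plan is to recognize the Q-iteration update \eqref{q-learning} as a single application of the Bellman optimality operator for the now-\emph{stationary} MDP $\mathcal{M}_i^k=\{P_i^k,r_i^k\}$, and then run the standard contraction argument, taking care to track the extra $\varepsilon$ slack that the alternate scheme introduces through its initialization. Concretely, I would define $(\mathcal{T}_i^k Q)(s,a_i) = r_i^k(s,a_i) + \gamma\,\mathbb{E}_{s'\sim P_i^k}[\max_{a_i'}Q(s',a_i')]$, so that the update reads $Q_i^{t+1,k}=\mathcal{T}_i^k Q_i^{t,k}$ and $Q_i^{*,k}$ is its unique fixed point. The first step is to verify that $\mathcal{T}_i^k$ is a $\gamma$-contraction in $\|\cdot\|_\infty$: using $|\max_{a'}Q(s',a') - \max_{a'}Q'(s',a')|\le \max_{a'}|Q(s',a')-Q'(s',a')|$, the reward terms cancel and one obtains $\|\mathcal{T}_i^k Q - \mathcal{T}_i^k Q'\|_\infty \le \gamma\|Q-Q'\|_\infty$.

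Rather than bounding the distance to the fixed point directly, I would route the argument through consecutive iterates, which is what makes the $(1-\gamma)$ factor in the claimed threshold appear. Contraction gives $\|Q_i^{t+1,k}-Q_i^{t,k}\|_\infty \le \gamma^t\|Q_i^{1,k}-Q_i^{0,k}\|_\infty$, and summing a geometric series via the triangle inequality (using $Q_i^{t,k}\to Q_i^{*,k}$) yields
\begin{equation}
\|Q_i^{t,k}-Q_i^{*,k}\|_\infty \le \sum_{j=t}^{\infty}\|Q_i^{j+1,k}-Q_i^{j,k}\|_\infty \le \frac{\gamma^t}{1-\gamma}\,\|Q_i^{1,k}-Q_i^{0,k}\|_\infty.
\end{equation}
It then remains to bound the single one-step gap $\|Q_i^{1,k}-Q_i^{0,k}\|_\infty = \|\mathcal{T}_i^k Q_i^{0,k}-Q_i^{0,k}\|_\infty \le \|\mathcal{T}_i^k Q_i^{0,k}\|_\infty + \|Q_i^{0,k}\|_\infty$. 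Here an exact optimal Q-function is bounded by $R=r_{\max}/(1-\gamma)$ in $\infty$-norm (rewards bounded in magnitude by $r_{\max}$), while the initializer $Q_i^{0,k}$ is the last iterate carried over from round $k-1$ and hence lies within $\varepsilon$ of such an optimum, so $\|Q_i^{0,k}\|_\infty\le R+\varepsilon$; one backup preserves this up to the same slack, $\|\mathcal{T}_i^k Q_i^{0,k}\|_\infty \le r_{\max}+\gamma(R+\varepsilon)\le R+\varepsilon$. Adding the two terms gives $\|Q_i^{1,k}-Q_i^{0,k}\|_\infty \le 2R+2\varepsilon$.

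Combining the two estimates, it suffices to make $\tfrac{\gamma^t}{1-\gamma}(2R+2\varepsilon)\le \varepsilon$. Rearranging to $\gamma^t \le \tfrac{(1-\gamma)\varepsilon}{2R+2\varepsilon}$ and taking logarithms gives $t\log\gamma \le \log((1-\gamma)\varepsilon)-\log(2R+2\varepsilon)$; since $\gamma\in[0,1)$ forces $\log\gamma<0$, dividing flips the inequality and produces exactly the stated threshold. The two places that deserve care—rather than the mechanical contraction—are the sign flip when dividing by $\log\gamma$, the most common slip in bounds of this form, and the justification of the $2R+2\varepsilon$ estimate, which hinges on the fact that the alternate scheme seeds round $k$ with an $\varepsilon$-converged function from round $k-1$ rather than an exactly optimal one; this is where the uniform boundedness of all iterates by $R$ and the per-round $\varepsilon$ tolerance must be invoked together.
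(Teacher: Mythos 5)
Your proposal is correct and follows essentially the same route as the paper: a $\gamma$-contraction bound on consecutive iterates, a geometric-series sum giving $\|Q^{t,k}_i - Q^{*,k}_i\|_\infty \le \frac{\gamma^t}{1-\gamma}\|Q^{1,k}_i - Q^{0,k}_i\|_\infty$, and the observation that the $\varepsilon$-converged initializer carried over from round $k-1$ yields $\|Q^{1,k}_i - Q^{0,k}_i\|_\infty \le 2R + 2\varepsilon$. Your derivation of that last residual bound via the direct triangle inequality $\|\mathcal{T}^k_i Q^{0,k}_i\|_\infty + \|Q^{0,k}_i\|_\infty$ is in fact a bit cleaner than the paper's decomposition through $Q^{*,k-1}_i$ and cross-round reward/transition differences, but it reaches the identical constant, and everything else (including the sign flip when dividing by $\log\gamma$) matches.
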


	\begin{corollary}\label{corollary-t}
		For any $\varepsilon > 0$, if we take sufficient Q-iteration $t^k_i$, \textit{i.e.}, $Q^k_i = Q^{t_i^k,k}_i$, then we have 
		\begin{align*}
			\big\|Q^k_i - Q^{*,k}_i\big\|_\infty \le \varepsilon \quad \forall k, i.
		\end{align*}
	\end{corollary}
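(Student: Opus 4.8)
The plan is to prove the corollary by induction on the round index $k$, using Lemma \ref{lemma-varepsilon} as the engine for the inductive step. The key observation is that the iteration threshold appearing in Lemma \ref{lemma-varepsilon}, namely $t \ge \frac{\log((1-\gamma)\varepsilon) - \log(2R+2\varepsilon)}{\log\gamma}$, depends only on the fixed constants $\gamma$, $R$, and $\varepsilon$, and not on $k$ or $i$. Hence, once $\varepsilon$ is fixed, I can choose a single uniform iteration count $t^\star$ that simultaneously works for every agent and every round, and then simply require each $t^k_i \ge t^\star$.

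First I would establish the base case $k=1$ (the very first round). Here there is no previous round to inherit from, so the argument of Lemma \ref{lemma-varepsilon} does not apply verbatim; instead I would bound the initial Bellman residual directly. Assuming the Q-functions are initialized within the admissible range $\|Q^{0,1}_i\|_\infty \le R$, a short calculation gives $|r^1_i(s,a_i) + \gamma\,\mathbb{E}_{s'\sim P_i^1}[\max_{a_i'} Q^{0,1}_i(s',a_i')] - Q^{0,1}_i(s,a_i)| \le r_{\max} + (1+\gamma)R = 2R$, using $r_{\max} = (1-\gamma)R$. Plugging this into the geometric bound \eqref{td1} yields $\|Q^{t,1}_i - Q^{*,1}_i\|_\infty \le \frac{\gamma^t}{1-\gamma}\,2R$, so taking $t^1_i \ge t^\star$ secures the claim for $k=1$ (the threshold for this $2R$ residual is in fact weaker than the inductive one, so $t^\star$ still suffices).

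Next I would carry out the inductive step. Suppose $\|Q^{k-1}_i - Q^{*,k-1}_i\|_\infty \le \varepsilon$ holds for all $i$. Since agents update by turns, the starting point of round $k$ is exactly the endpoint of round $k-1$, i.e. $Q^{0,k}_i = Q^{k-1}_i$, so the induction hypothesis is precisely the property \eqref{induct-prop} assumed inside the proof of Lemma \ref{lemma-varepsilon}. Applying that lemma then gives $\|Q^{t,k}_i - Q^{*,k}_i\|_\infty \le \varepsilon$ whenever $t \ge t^\star$, and choosing $t^k_i \ge t^\star$ completes the step. By induction the bound holds for all $k$ and $i$.

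The main obstacle I anticipate is the base case rather than the inductive step: Lemma \ref{lemma-varepsilon} already does the heavy lifting for $k \ge 2$, but its derivation explicitly relies on the existence of $Q^{*,k-1}_i$ through \eqref{induct-prop}, which is unavailable at $k=1$. The work there is to verify that a clean initialization assumption (boundedness of $Q^{0,1}_i$) makes the initial residual bounded by $2R \le 2R+2\varepsilon$, so that the \emph{same} uniform threshold $t^\star$ continues to apply and the induction gets off the ground.
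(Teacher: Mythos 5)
Your proposal is correct, and it is in fact more careful than the paper, which states Corollary~\ref{corollary-t} with no proof at all, treating it as an immediate consequence of Lemma~\ref{lemma-varepsilon}. You have correctly diagnosed the one subtlety the paper glosses over: the proof of Lemma~\ref{lemma-varepsilon} is not self-contained, because it invokes \eqref{induct-prop}, i.e.\ the assumption that the \emph{previous} round already achieved $\varepsilon$-convergence. That makes the lemma an inductive step rather than a standalone statement, so the corollary genuinely requires the induction on $k$ that you spell out, together with a base case that the paper never addresses. Your base case is sound: under the initialization $\|Q^{0,1}_i\|_\infty \le R$ the initial Bellman residual is at most $r_{\max} + (1+\gamma)R = 2R \le 2R + 2\varepsilon$, so the geometric bound \eqref{td1} applies with a threshold no larger than the one in Lemma~\ref{lemma-varepsilon}, and your observation that this threshold is uniform in $k$ and $i$ (depending only on $\gamma$, $R$, $\varepsilon$) is exactly what lets a single $t^\star$ close the induction. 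In short, your argument reaches the same conclusion by the route the paper clearly intends, but it fills in the inductive scaffolding and the $k=1$ case that the paper leaves implicit.
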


	With Lemma \ref{lemma-policy-iteration}, Lemma \ref{lemma-varepsilon}, and Corollary \ref{corollary-t}, we have the following theorem.  
	
	\begin{theorem}[multi-agent alternate Q-iteration]\label{theorem-q-iteration} Suppose that $Q^*_i(s,\cdot)$ has the unique maximum for all states and all agents. If all agents in turn take Q-iteration to $\|Q^k_i - Q^{*,k}_i\|_\infty \le \varepsilon$, then their joint policy sequence $\{\boldsymbol{\pi}^k\}$ converges to a Nash equilibrium, where $\pi_i^k(s) = \arg\max_{a_i}Q_i^k(s,a_i)$. 
	\end{theorem}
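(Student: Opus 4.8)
The plan is to reduce the approximate, $\varepsilon$-tolerant Q-iteration scheme to the exact multi-agent alternate policy iteration of Lemma~\ref{lemma-policy-iteration}, whose convergence to a Nash equilibrium is already in hand. The bridge between the two is a single observation made quantitative by the uniqueness-of-maximum hypothesis: acting greedily with respect to an $\varepsilon$-accurate $Q$-function recovers \emph{exactly} the same deterministic policy as acting greedily with respect to the true optimal $Q$-function of the round's MDP, provided $\varepsilon$ is smaller than half the maximizer's margin.

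First I would pin down the exact counterpart. By \eqref{eq:pimax}, if at round $k$ agent $i$ plays its best response $\pi_i^{*,k}(s)=\arg\max_{a_i}Q_i^{*,k}(s,a_i)$ against the fixed $\pi_{-i}^k$, the joint value improves at least as much as under any other unilateral change; combining this with the monotone-improvement and fixed-point reasoning of Lemma~\ref{lemma-policy-iteration} shows that the exact best-response sequence $\{\boldsymbol{\pi}^{*,k}\}$ converges to a Nash equilibrium. In the tabular (finite state--action) setting the deterministic joint policy ranges over a finite set, so monotone improvement forces the sequence to stabilize after finitely many rounds, and the stabilized profile is Nash because no agent can then improve unilaterally.

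Second, and this is the crux, I would establish the argmax-matching step. Fix a state $s$, let $a_i^*=\arg\max_{a_i}Q_i^{*,k}(s,a_i)$, and set $\Delta_i^k(s)=Q_i^{*,k}(s,a_i^*)-\max_{a_i\neq a_i^*}Q_i^{*,k}(s,a_i)>0$, strictly positive by uniqueness. Using Corollary~\ref{corollary-t}, $\|Q_i^k-Q_i^{*,k}\|_\infty\le\varepsilon$, so for every $a_i\neq a_i^*$ one gets $Q_i^k(s,a_i^*)-Q_i^k(s,a_i)\ge\Delta_i^k(s)-2\varepsilon$; hence as soon as $\varepsilon<\tfrac{1}{2}\Delta_i^k(s)$ the approximate greedy action equals $a_i^*$, i.e. $\pi_i^k=\pi_i^{*,k}$. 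To make one $\varepsilon$ serve every round I would invoke finiteness: only finitely many deterministic profiles $\pi_{-i}$ exist, so only finitely many optimal functions $Q_i^{*,k}$ can arise, whence $\Delta_0:=\min_{i,k,s}\Delta_i^k(s)>0$ is well defined and any $\varepsilon<\Delta_0/2$ works uniformly.

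I would then close by induction on $k$: assuming the approximate and exact trajectories agree through round $k-1$ (so $\pi_{-i}^k$, and therefore $Q_i^{*,k}$, coincide in both), the argmax-matching step yields $\pi_i^k=\pi_i^{*,k}$, extending the agreement to round $k$; consequently $\{\boldsymbol{\pi}^k\}$ is identical to the exact best-response sequence and inherits its convergence to a Nash equilibrium. The main obstacle I anticipate is precisely the uniformity of $\varepsilon$ over the a priori unbounded number of rounds: the per-round threshold $\tfrac{1}{2}\Delta_i^k(s)$ depends on $\pi_{-i}^k$, which is itself generated by the approximate run, so one must avoid circularity by fixing $\varepsilon<\Delta_0/2$ \emph{in advance} from the finiteness of the policy space rather than from any trajectory-dependent margin.
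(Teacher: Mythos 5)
Your strategy is genuinely different from the paper's: you try to show that the $\varepsilon$-approximate run reproduces, round by round, the \emph{exact} best-response trajectory of Lemma~\ref{lemma-policy-iteration}, using an argmax-matching step with a margin that is uniform over rounds. The paper instead never matches the approximate trajectory to an exact one; it argues that the round-optimal functions $Q^{*,k}_i$ converge to a limit $Q^*_i$, defines the action gap $\Delta$ \emph{only at that limit}, and shows that for all $k\ge k_0$ the greedy policy of $Q^k_i$ already coincides with $\pi^*_i=\arg\max_{a_i}Q^*_i(s,a_i)$ because $Q^k_i$ is within $2\varepsilon$ of $Q^*_i$ and $\varepsilon=\Delta/6$.

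The genuine gap in your argument is the step ``$\Delta_i^k(s)>0$, strictly positive by uniqueness.'' The theorem's hypothesis asserts a unique maximizer only for $Q^*_i(s,\cdot)$, which in the paper's proof denotes the \emph{limiting} Q-function, not the per-round optimal $Q^{*,k}_i$ of the MDP $\mathcal{M}^k_i$. Nothing in the hypothesis rules out ties in $Q^{*,k}_i(s,\cdot)$ at some intermediate round (or for some deterministic profile $\pi_{-i}$ that your uniform minimum $\Delta_0=\min_{i,k,s}\Delta_i^k(s)$ ranges over). At such a tie the margin is zero, no choice of $\varepsilon>0$ forces the $\varepsilon$-accurate argmax to agree with any designated exact best response, and your induction breaks at the first round where this happens: once the approximate policy deviates, all subsequent MDPs $\mathcal{M}^k_i$, and hence all subsequent $Q^{*,k}_i$, differ from those of the exact run. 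Your fix for uniformity --- taking the minimum over all finitely many deterministic profiles --- only sharpens the problem, since it requires \emph{every} best-response Q-function in the game to have a unique maximizer, a strictly stronger genericity assumption than the one stated. To repair the argument along the paper's lines, you should abandon round-by-round trajectory matching and instead work at the limit: show $Q^{*,k}_i\to Q^*_i$, use the single gap $\Delta$ of $Q^*_i$ (positive by hypothesis), and conclude via $\|Q^k_i-Q^*_i\|_\infty\le\|Q^k_i-Q^{*,k}_i\|_\infty+\|Q^{*,k}_i-Q^*_i\|_\infty\le 2\varepsilon$ that the greedy policies stabilize at $\bm{\pi}^*$ for all large $k$.
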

	
	All the proofs are included in Appendix \ref{sec:app:proofs}.
	
	Theorem~\ref{theorem-q-iteration} assumes that for each agent, $Q_i^*$ has the unique maximum over actions for all states. Although this may not hold in general, in practice we can easily settle this by introducing a positive random noise to the reward function. Suppose the random noise is bounded by $\delta$, then we can easily derive that the performance drop of optimizing environmental reward plus noise is bounded by $\delta/(1-\gamma)$. As we can make $\delta$ arbitrarily small, the bound is tight. Moreover, as there might be many Nash equilibria, Theorem~\ref{theorem-q-iteration} does not guarantee the converged joint policy is optimal.

	\subsection{Multi-Agent Alternate Q-Learning}
	
	From Theorem \ref{theorem-q-iteration}, we know that if each agent $i$ guarantees $\varepsilon$-convergence to $Q_i^{*,k}$ in each round $k$, multi-agent alternate Q-iteration also guarantees a Nash equilibrium of the joint policy. This immediately suggests a simple, practical fully decentralized learning method, namely multi-agent alternate Q-learning (MA2QL).

	For learning Q-table or Q-network, MA2QL makes minimal changes to IQL. 
	\begin{itemize}
		\item For learning Q-tables, all agents in turn update their Q-tables. At a round $k$ of an agent $i$, all agents interact in the environment, and the agent $i$ updates its Q-table a few times using the collected transitions $\left<s,a_i,r,s' \right>$. 
		\item For learning Q-networks, all agents in turn update their Q-networks. At a round of an agent $i$, all agents interact in the environment, and each agent $j$ stores the collected transitions $\left<s,a_j,r,s' \right>$ into its replay buffer, and the agent $i$ updates its Q-network using sampled mini-batches from its replay buffer. 
	\end{itemize}

	There is a slight difference between learning Q-table and Q-network. Strictly following multi-agent alternate Q-iteration, Q-table is updated by transitions sampled from the current MDP. On the other hand, Q-network is updated by mini-batches sampled from the replay buffer. If the replay buffer only contains the experiences sampled from the current MDP, learning Q-network also strictly follows multi-agent alternate Q-iteration. However, in practice, we slightly deviate from that and allow the replay buffer to contain transitions of past MDPs, following IQL \citep{sunehag2018vdn,rashid2018qmix,papoudakis2021benchmarking} for sample efficiency, the convergence may not be theoretically guaranteed though.
	
	MA2QL and IQL can be simply summarized and highlighted as \textit{\textbf{MA2QL agents take turns to update Q-functions by Q-learning, whereas IQL agents simultaneously update Q-functions by Q-learning.}}

	\section{Related Work}
	
	\textbf{CTDE.} The most popular learning paradigm in cooperative MARL is centralized training with decentralized execution (CTDE), including value decomposition and multi-agent actor-critic. For value decomposition \citep{sunehag2018vdn,rashid2018qmix,son2019qtran,wang2020qplex}, a joint Q-function is learned in a centralized manner and factorized into local Q-functions to enable decentralized execution. For multi-agent actor-critic, a centralized critic, Q-function or V-function, is learned to provide gradients for local policies \citep{lowe2017multi,foerster2018COMA,yu2021surprising}. Moreover, some studies \citep{wang2020dop, peng2021facmac, su2022divergence} combine value decomposition and multi-agent actor-critic to take advantage of both, while others rely on maximum-entropy RL to naturally bridge the joint Q-function and local policies \citep{iqbal2019actor,zhang2021fop,wang2022more}. 
	
	\textbf{Decentralized learning.} Another learning paradigm in cooperative MARL is decentralized learning, where the simplest way is for each agent to learn independently, \textit{e.g.}, independent Q-learning (IQL) or independent actor-critic (IAC). These methods are usually taken as simple baselines for CTDE methods. For example, IQL is taken as a baseline in value decomposition methods \citep{sunehag2018vdn,rashid2018qmix}, while IAC is taken as a baseline in multi-agent actor-critic \citep{foerster2018COMA,yu2021surprising}. Some studies further consider decentralized learning with \textit{communication} \citep{zhang2018fully,li2020f2a2}, but they are not fully decentralized methods.  More recently, IAC (\textit{i.e.}, independent PPO) has been empirically investigated and found remarkably effective in several cooperative MARL tasks \citep{de2020independent,yu2021surprising}, including multi-agent particle environments (MPE) \citep{lowe2017multi} and StarCraft multi-agent challenge (SMAC). 
	However, as actor-critic methods follow the principle different from Q-learning, we will not focus on IAC for comparison in the experiment. 
	On the other hand, IQL has also been thoroughly benchmarked and its performance is close to CTDE methods in a few tasks \citep{papoudakis2021benchmarking}. This sheds some light on the potential of value-based decentralized cooperative MARL. Although there are some Q-learning variants, \textit{i.e.}, hysteretic Q-learning \citep{HQL} and lenient Q-learning \citep{LQL}, for fully decentralized learning, they both are heuristic and their empirical performance is even worse than IQL \citep{zhang2020bi,jiang2022iq}. MA2QL may still be built on top of these Q-learning variants, \textit{e.g.}, the concurrent work \citep{jiang2022iq, jiang2022best}, which however requires a thorough study and is beyond the scope of this paper. 
	
	MA2QL is theoretically grounded for fully decentralized learning and in practice makes minimal changes to IQL. In the next section, we provide an extensive empirical comparison between MA2QL and IQL.

	\section{Experiments}

	In this section, we empirically study MA2QL on a set of cooperative multi-agent tasks, including a didactic game, multi-agent particle environments (MPE) \citep{lowe2017multi}, multi-agent MuJoCo \citep{peng2021facmac}, and StarCraft multi-agent challenge (SMAC) \citep{samvelyan2019starcraft}, to investigate the following questions.
	\begin{description}
		\item[1.] \emph{Does MA2QL converge and what does it converge to empirically, compared with the optimal solution and IQL? How do the number of Q-function updates, exploration, and environmental stochasticity affect the convergence?}
		\item[2.] \emph{As MA2QL only makes the minimal changes to IQL, is MA2QL indeed better than IQL in both discrete and continuous action spaces, and in more complex tasks?}
	\end{description}
	
	In all the experiments, the training of MA2QL and IQL is based on the same number of environmental steps (\textit{i.e.}, the same number of samples). Moreover, as the essential difference between MA2QL and IQL is that MA2QL agents take turns to update Q-function while IQL agents update Q-function simultaneously, for a fair comparison, the total number of Q-function updates for each agent in MA2QL is set to be the same with that in IQL. For example, in a setting of $n$ agents, if IQL agents update Q-function $m$ steps (\textit{e.g.}, gradient steps) every environmental step, then each MA2QL agent updates its Q-function $n \times m$ steps each environmental step during its turn. For a training process of $T$ environmental steps, the number of updates for each IQL agent is $T \times m$, while for each MA2QL agent it is also $\frac{T}{n} \times n \times m$. For learning Q-networks, the size of the replay buffer is also the same for IQL and MA2QL. Moreover, we \textit{do not use parameter-sharing}, which should not be allowed in decentralized settings \citep{terry2020revisiting} as a centralized entity is usually required to collect all the parameters. Detailed experimental settings, hyperparameters, and additional results are available in Appendix~\ref{app:exp}, \ref{app:hyper}, and \ref{app:results}, respectively. All results are presented using the mean and standard deviation of five random seeds.
	
	\subsection{A Didactic Game}
	\label{subsec:tabular}

	\begin{figure}[t]
		\setlength{\abovecaptionskip}{5pt}
		\centering
		\begin{subfigure}{0.245\linewidth}
			\setlength{\abovecaptionskip}{0pt}
			\includegraphics[width=\linewidth]{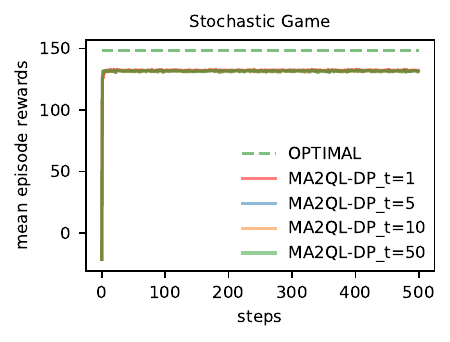}
			\caption{Q-iteration}
			\label{fig:qi}
		\end{subfigure}
		\begin{subfigure}{0.245\linewidth}
			\setlength{\abovecaptionskip}{0pt}
			\includegraphics[width=\linewidth]{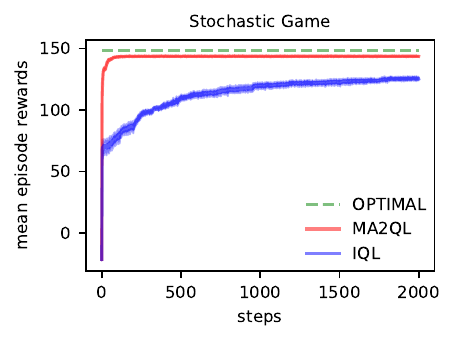}
			\caption{MA2QL}
			\label{fig:curves}
		\end{subfigure}
		\begin{subfigure}{0.245\linewidth}
			\setlength{\abovecaptionskip}{0pt}
			\includegraphics[width=\linewidth]{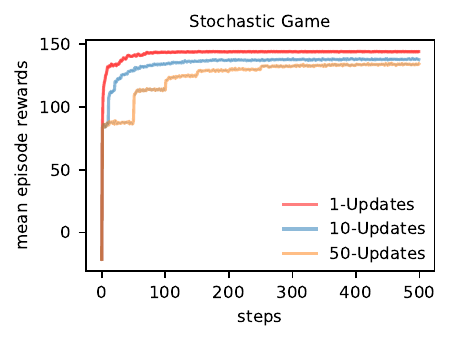}
			\caption{Q-table updates}
			\label{fig:q-update}
		\end{subfigure}
		\begin{subfigure}{0.245\linewidth}
			\setlength{\abovecaptionskip}{0pt}
			\includegraphics[width=\linewidth]{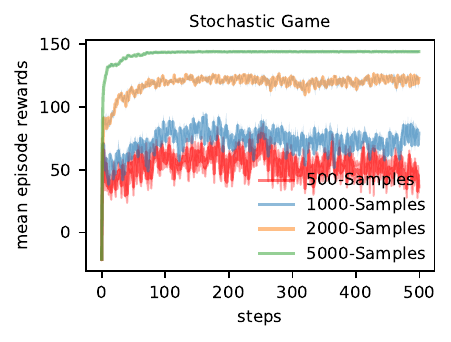}
			\caption{samples}
			\label{fig:samples}
		\end{subfigure}
		\caption{Empirical studies of MA2QL on the didactic game: (a) different numbers of Q-iterations performed by dynamic programming at each turn; (b) learning curve of MA2QL compared with IQL and the global optimum; (c) different numbers of Q-table updates at each turn; (d) different number of sampled transitions at each turn, where x-axis is learning steps.}
		\label{fig:matrix}
		\vspace{-2mm}
	\end{figure}
	
	The didactic game is a cooperative stochastic game, which is randomly generated for the reward function and transition probabilities with 30 states, 3 agents, and 5 actions for each agent. Each episode in the game contains 30 timesteps. For comparison, we use dynamic programming to find the global optimal solution, denoted as OPTIMAL. For MA2QL and IQL, each agent independently learns a $30 \times 5$ Q-table.
	
	First, we investigate how the number of Q-iterations empirically affects the convergence of multi-agent alternate Q-iteration, where Q-iteration is performed by dynamic programming (full sweep over the state set) and denoted as MA2QL-DP. As shown in Figure~\ref{fig:qi}, we can see that different numbers of Q-iterations (\textit{i.e.}, $t=1,5,10,50$) that each agent takes at each turn do not affect the convergence in the didactic game, even when $t=1$. This indicates $\varepsilon$-convergence of Q-iteration can be easily satisfied with as few as one iteration. 
	Next, we compare the performance of MA2QL and IQL. As illustrated in Figure~\ref{fig:curves}, IQL converges slowly (about 2000 steps), while MA2QL converges much faster (less than 100 steps) to a better return and also approximates OPTIMAL. Once again, MA2QL and IQL use the same number of samples and Q-table updates for each agent. One may notice that the performance of MA2QL is better than MA2QL-DP. This may be attributed to sampling and exploration of MA2QL, which induces a better Nash equilibrium.
	Then, we investigate MA2QL in terms of the number of Q-table updates at each turn, which resembles the number of Q-iterations by learning on samples. Specifically, denoting $K$ as the number of Q-table updates, to update an agent, we repeat $K$ times of the process of sampling experiences and updating Q-table. This means with a larger $K$, agents take turns less frequently. As shown in Figure~\ref{fig:q-update}, with larger $K$, the learning curve is more stair-like, which means in this game a small number $K$ is enough for convergence at each turn. Thus, with larger $K$, the learning curve converges more slowly. 
	Last, we investigate how the number of collected transitions at each turn impacts the performance of MA2QL. As depicted in Figure~\ref{fig:samples}, the performance of MA2QL is better with more samples. This is because the update of Q-learning using more samples is more like to induce a full iteration of Q-table. 
	
	\textbf{Stochasticity.} We investigate MA2QL under different stochasticity levels of the environment. We control the stochasticity level by introducing p\_trans. For any transition, a state has the probability of p\_trans to transition to next state uniformly at random, otherwise follows the original transition probability. Thus, larger p\_trans means higher level of stochasticity. As illustrated in Figure~\ref{fig:stochasticity}, MA2QL outperforms IQL in all stochasticity levels. 
	
	\begin{figure}[t]
		\centering
		\setlength{\abovecaptionskip}{5pt}
		\includegraphics[width=.98\linewidth]{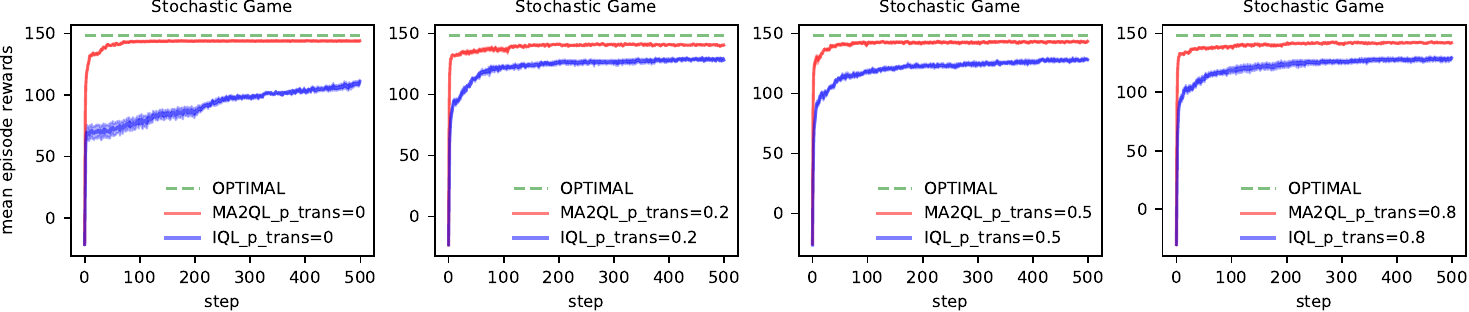}
		\caption{Learning curves of MA2QL compared with IQL in the didactic game under different stochasticity levels, where p\_trans is the probability of a state transitions to the next state uniformly at random and hence larger p\_trans means a higher level of stochasticity.}
		\label{fig:stochasticity}
		\vspace{-2mm}
	\end{figure}
	
	\begin{figure}[t]
		\centering
		\setlength{\abovecaptionskip}{5pt}
		\includegraphics[width=.98\linewidth]{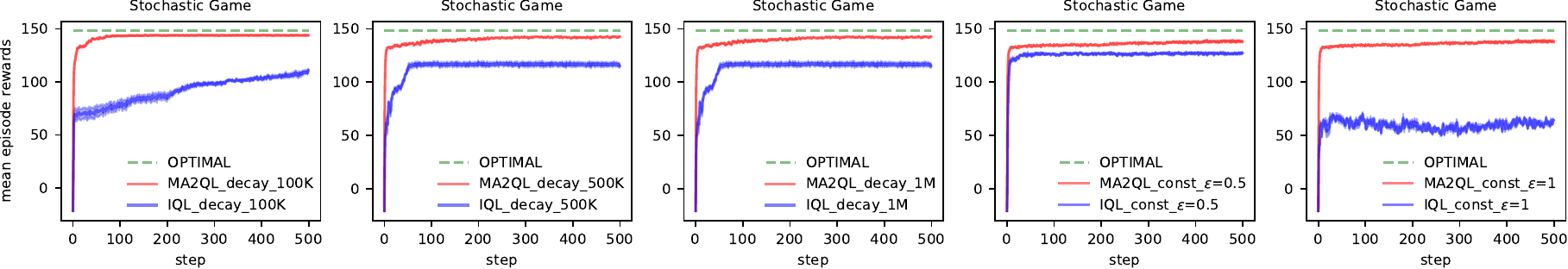}
		\caption{Learning curves of MA2QL compared with IQL in the didactic game under different exploration schemes, where decay\_1M means $\epsilon$ decays from 1 to 0.02 over $10^6$ environment steps, similarly for others.}
		\label{fig:exploration}
		\vspace{-4mm}
	\end{figure}

	\textbf{Exploration.} We further investigate the performance of MA2QL under different exploration schemes, including different decaying rates and constant $\epsilon$. As illustrated in Figure~\ref{fig:exploration}, MA2QL again outperforms IQL in all these exploration schemes. Note that decay\_1M means $\epsilon$ decays from 1 to 0.02 over $10^6$ environment steps, similarly for others. These results show the robustness of MA2QL under various exploration schemes, even when $\epsilon=1$.

	\begin{wrapfigure}{!t}{0.33\textwidth}
		\vspace{-4mm}
		\centering
		\includegraphics[width=0.32\textwidth]{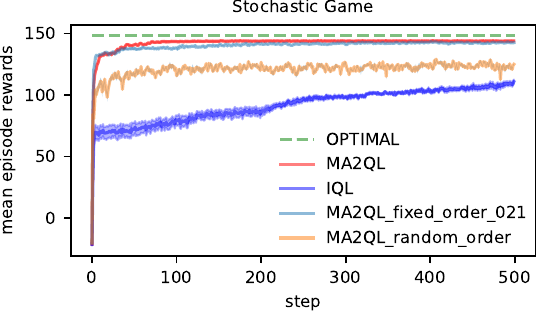}
		\vspace{-2mm}
		\caption{Learning curves of MA2QL of different update orders. There are three agents in the didactic game, so there are essentially two different pre-defined update orders: [0, 1, 2] (MA2QL) and [0, 2, 1].}
		\label{fig:update order}
		\vspace{-4mm}
	\end{wrapfigure}
	
	\textbf{Update Order.} Our theoretical result holds on any update order of agents, which means the order does not affect the convergence theoretically. Here we investigate the empirical performance of MA2QL under different pre-defined orders and the random order at each round. As there are three agents in this environment, there are essentially \textit{two} different pre-defined orders. Suppose that three agents are indexed from 0 to 2. The alternating update orders [0, 1, 2], [1, 2, 0], and [2, 0, 1] are the same, while [0, 2, 1], [1, 0, 2], and [2, 1, 0] are the same. As illustrated in Figure~\ref{fig:update order}, the performance of MA2QL is almost the same under the two orders (MA2QL is the order of [0, 1, 2]), which shows the robustness of MA2QL under different pre-defined orders. As for the random order at each round, the performance drops but is still better than IQL. One possible reason is that agents are not evenly updated due to the random order at each round, which may consequently induce a worse Nash equilibrium. 
	
	In the didactic game, we show that MA2QL consistently outperforms IQL under various settings. In the following experiments, for a fair comparison, the hyperparameters of IQL are well-tuned and we directly build MA2QL on top of IQL. 
	As for the update order of MA2QL agents, we use a randomly determined order throughout a training process.

	\subsection{MPE}
	
	MPE is a popular environment in cooperative MARL. We consider three partially observable tasks: 5-agent simple spread, 5-agent line control, and 7-agent circle control \citep{agarwal2020learning}, where the action space is set to \textit{discrete}. Moreover, we use the sparse reward setting for these tasks, thus they are more difficult than the original ones. More details are available in Appendix~\ref{app:exp}. For both IQL and MA2QL, Q-network is learned by DQN \citep{mnih2013playing}. 
	
	Figure~\ref{fig:mpe} shows the learning curve of MA2QL compared with IQL in these three MPE tasks. In simple spread and circle control, at the early training stage, IQL learns faster and better than MA2QL, but eventually MA2QL converges to a better joint policy than IQL. The converged performance of IQL is always worse than MA2QL, similar to that in the didactic game. Moreover, unlike the didactic game, simultaneous learning of IQL may also make the learning unstable even at the late training stage as in line control and circle control, where the episode rewards may decrease. On the other hand, learning by turns gradually improves the performance and converges to a better joint policy than IQL.  
	
	As MA2QL and IQL both use replay buffer that contains old experiences, \textit{\textbf{why does MA2QL outperform IQL}}? The reason is that their experiences are generated in different manners. In the Q-learning procedure for each agent $i$ , the ideal target is $y_i = r^{\boldsymbol{\pi}}_i(s,a_i) + \lambda\mathbb{E}_{s^\prime \sim P^{\boldsymbol{\pi}}_i(\cdot|s,a_i)}[ \max_{a^\prime_i} Q_i(s^\prime,a^\prime_i)]$ and the practical target is $\tilde{y}_i = r^{\boldsymbol{\pi}_{\operatorname{D}}}_i(s,a_i) + \lambda\mathbb{E}_{s^\prime \sim P^{\boldsymbol{\pi}_{\operatorname{D}}}_i(\cdot|s,a_i)}[ \max_{a^\prime_i} Q_i(s^\prime,a^\prime_i)]$, where $\boldsymbol{\pi}_{\operatorname{D}}$ is the average joint policy for the experiences in the replay buffer. We then can easily obtain a bound for the target that $\left| y_i - \tilde{y}_i \right| \le \frac{2-\gamma}{1-\gamma}r_{\max} D_{\operatorname{TV}}\left(\pi^{-i}(\cdot|s)\|\pi_{\operatorname{D}}^{-i}(\cdot|s)\right)$ where $r_{\max} = \max_{s,\boldsymbol{a}}r(s,\boldsymbol{a})$. We can then give an explanation from the aspect of the divergence between $\boldsymbol{\pi}$ and $\boldsymbol{\pi}_{\operatorname{D}}$. MA2QL obtains experiences with only one agent learning, so the variation for the joint policy is smaller than that of IQL. Thus, in general, the divergence between $\boldsymbol{\pi}$ and $\boldsymbol{\pi}_{\operatorname{D}}$ is smaller for MA2QL, which is beneficial to the learning.

	\begin{figure}[t]
		\centering
		\setlength{\abovecaptionskip}{5pt}
		\includegraphics[width=.85\textwidth]{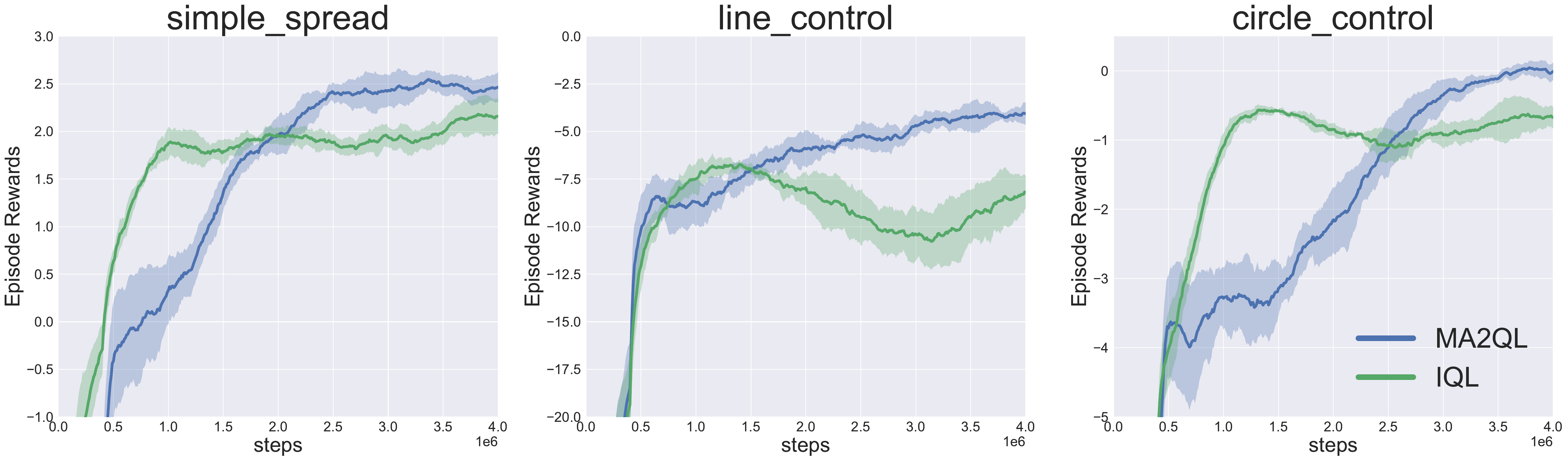}
		\caption{Learning curve of MA2QL compared with IQL in 5-agent simple spread, 5-agent line control, and 7-agent circle control in MPE, where x-axis is environment steps.}
		\label{fig:mpe}
		\vspace{-2mm}
	\end{figure}
	\begin{figure}[t]
		\centering
		\setlength{\abovecaptionskip}{5pt}
		\includegraphics[width=.85\textwidth]{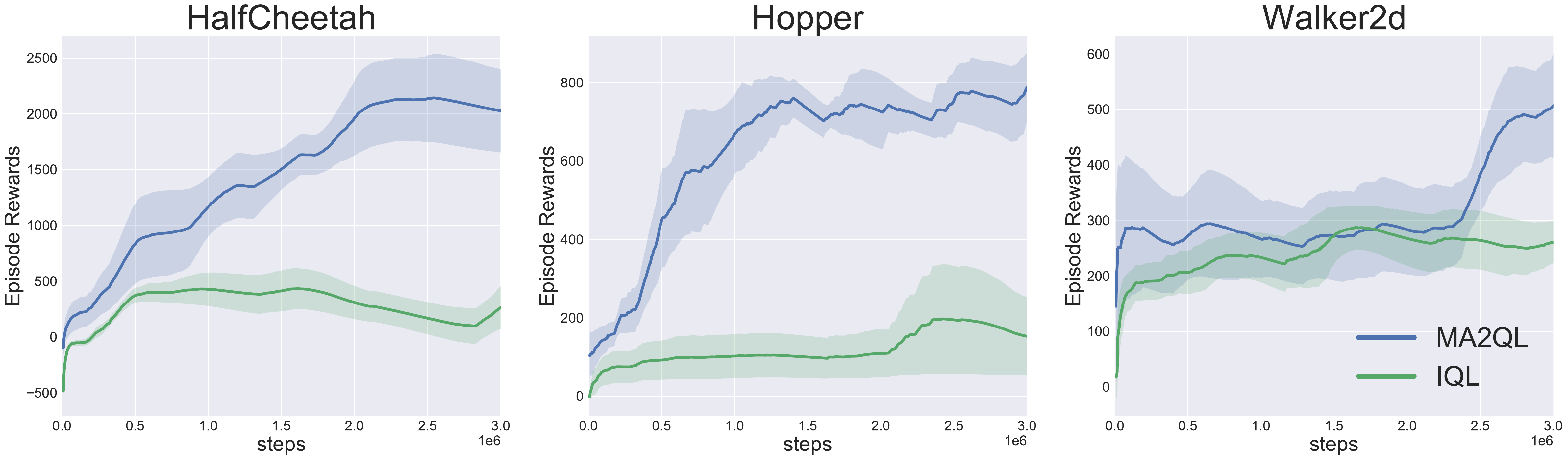}
		\caption{Learning curve of MA2QL compared with IQL in $2 \times 3$ HalfCheetah,  $3 \times 1$ Hopper, and $3 \times 2$ Walker2d in multi-agent MuJoCo, where x-axis is environment steps. }
		\label{fig:mujoco}
		\vspace{-2mm}
	\end{figure}
	
	\subsection{Multi-Agent MuJoCo}
	
	Multi-agent MuJoCo has become a popular environment in cooperative MARL for \textit{continuous} action space. We choose three robotic control tasks: $2 \times 3$ HalfCheetah, $3 \times 1$ Hopper, and $3 \times 2$ Walker2d.
	To investigate continuous action space in both partially and fully observable environments, we configure $2 \times 3$ HalfCheetah and $3 \times 1$ Hopper as fully observable, and $3 \times 2$ Walker2d as partially observable. More details and results on multi-agent MuJoCo are available in Appendix \ref{app:exp} and \ref{app:results} respectively.
	For both IQL and MA2QL, we use DDPG \citep{lillicrap2015continuous} as the alternative to DQN to learn a Q-network and a deterministic policy for each agent to handle continuous action space.  Here we abuse the notation a little and still denote them as IQL and MA2QL respectively. 
	
	In comparison to discrete action space, training multiple cooperative agents in continuous action space still remains challenging due to the difficulty of exploration and coordination in continuous action space. Thus, the evaluation on these multi-agent MuJoCo tasks can better demonstrate the effectiveness of decentralized cooperative MARL methods. As illustrated in Figure~\ref{fig:mujoco}, in all the tasks, we find that MA2QL consistently and significantly outperforms IQL while IQL struggles. We believe the reason is that the robotic control tasks are much more dynamic than MPE and the non-stationarity induced by simultaneous learning of IQL may be amplified, which makes it hard for agents to learn effective and cooperative policies. On the other hand, alternate learning of MA2QL can deal with the non-stationarity and sufficiently stabilize the environment during the learning process, especially in HalfCheetah and Hopper, where MA2QL stably converges to much better performance than IQL. According to these experiments, we can verify the superiority of MA2QL over IQL in the continuous action space.

	\subsection{SMAC}
	\label{sec:smac}
	
	\begin{figure}[t]
		\centering
		\setlength{\abovecaptionskip}{5pt}
		\includegraphics[width=.85\textwidth]{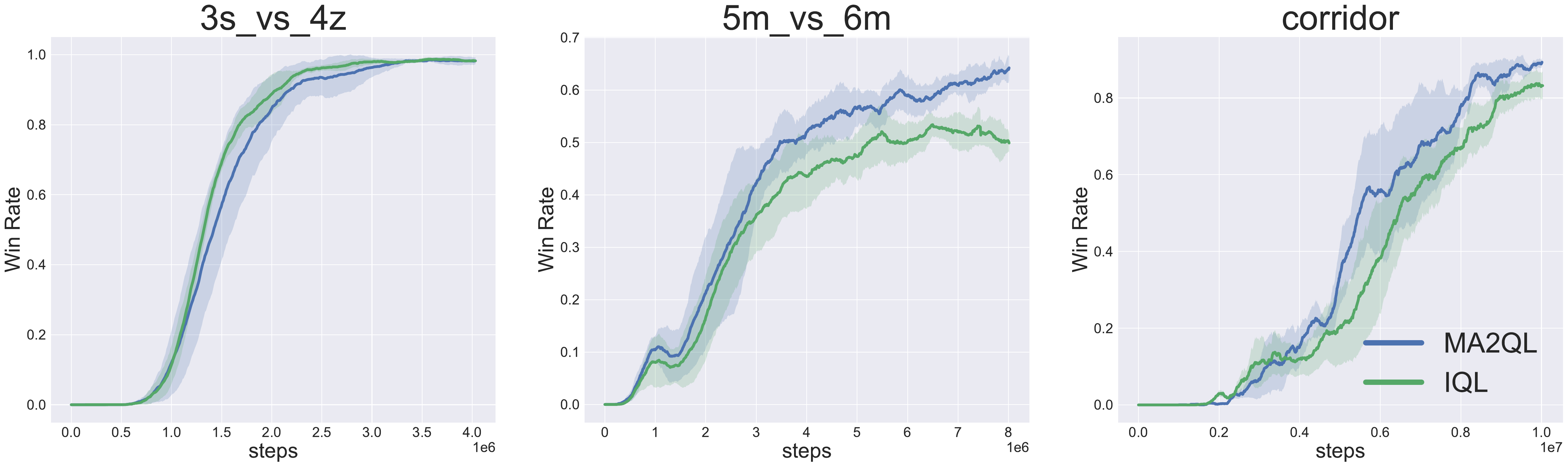}
		\caption{Learning curve of MA2QL compared with IQL on $\mathtt{3s\_vs\_4z}$ (easy), $\mathtt{5m\_vs\_6m}$ (hard) and $\mathtt{corridor}$ (super hard) in SMAC, where x-axis is environment steps. }
		\label{fig:smac}
		\vspace{-2mm}
	\end{figure}

	SMAC is a popular partially observable environment for benchmarking cooperative MARL algorithms. SMAC has a much larger exploration space, where agents are much easy to get stuck in sub-optimal policies especially in the decentralized setting. We test our method on three representative maps for three difficulties: $\mathtt{3s\_vs\_4z}$ (easy), $\mathtt{5m\_vs\_6m}$ (hard), and $\mathtt{corridor}$ (super hard), where harder map has more agents. It is worth noting that we do not use any global state in the decentralized training and each agent learns on its own trajectory. 
	
	The results are shown in Figure \ref{fig:smac}. On the map $\mathtt{3s\_vs\_4z}$, IQL and MA2QL both converge to the winning rate of 100\%. However, on the hard and super hard map $\mathtt{5m\_vs\_6m}$ and $\mathtt{corridor}$, MA2QL achieves stronger than IQL. It is worth noting that the recent study \citep{papoudakis2021benchmarking} shows that IQL performs well in SMAC, even close to CTDE methods like QMIX \citep{rashid2018qmix}. Here, we show that MA2QL can still outperform IQL in three maps with various difficulties, which indicates that MA2QL can also tackle the non-stationarity problem and bring performance gain in more complex tasks.

	\subsection{Hyperparameters and Scalability}
	
	\begin{figure}[t]
		\centering
		\setlength{\abovecaptionskip}{5pt}
		\begin{subfigure}{0.3\linewidth}
			\includegraphics[width=\linewidth]{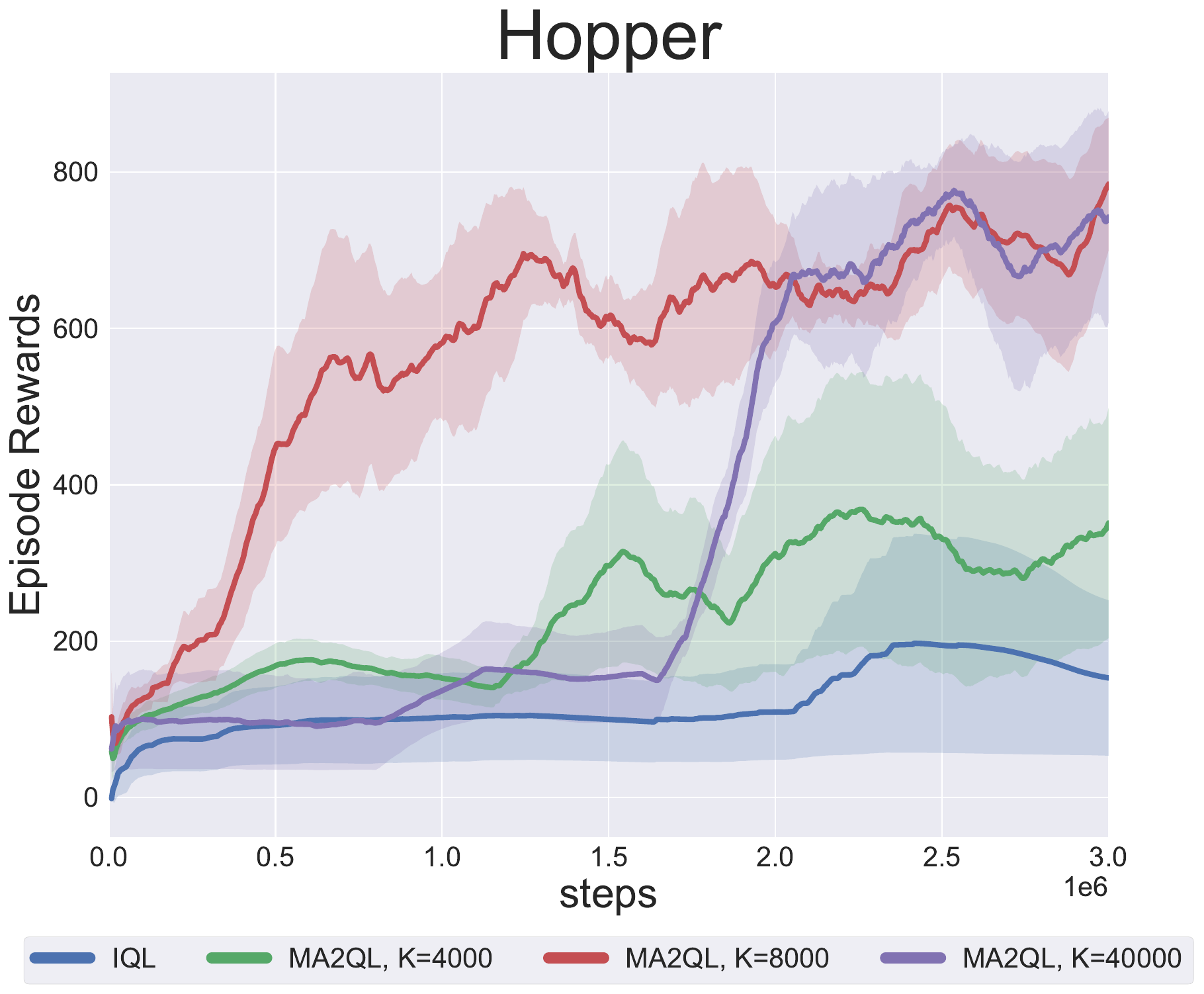}
			\caption{effect of $K$}
			\label{fig:ablation}
		\end{subfigure}
		\hspace{0.1cm}
		\begin{subfigure}{0.3\linewidth}
			\includegraphics[width=\linewidth]{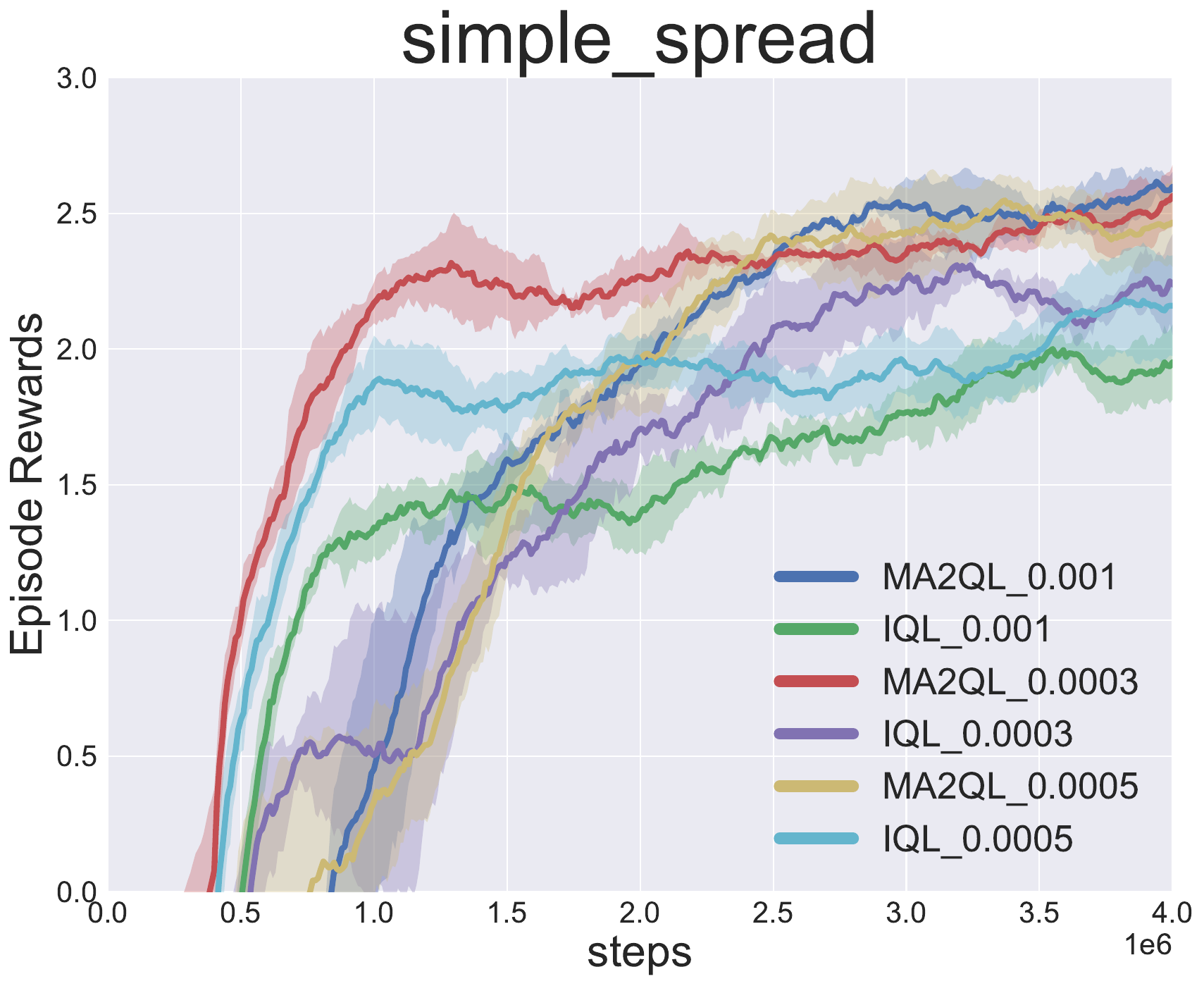}
			\caption{learning rate}
			\label{fig:mpe_lr}
		\end{subfigure}
		\hspace{0.1cm}
		\begin{subfigure}{0.305\linewidth}
			\includegraphics[width=\linewidth]{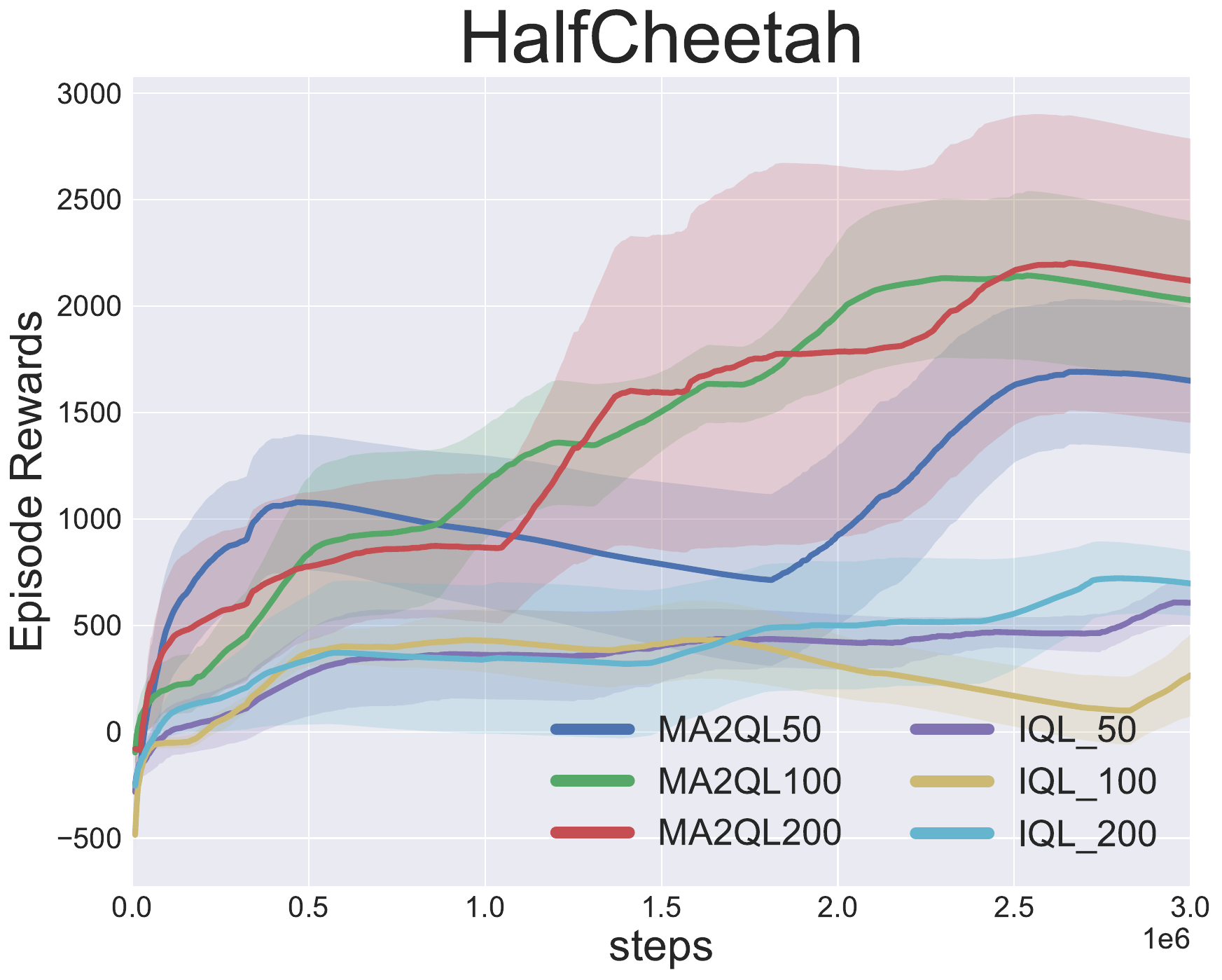}
			\caption{batch size}
			\label{fig:mujoco_bs}
		\end{subfigure}
		\caption{The effect of hyperparameters: \subref{fig:ablation} learning curves of MA2QL with different $K$ in $3 \times 1$ Hopper, compared with IQL; \subref{fig:mpe_lr} learning curve of MA2QL compared with IQL with different learning rates in simple spread in MPE; \subref{fig:mujoco_bs} learning curve of MA2QL compared with IQL with different batch sizes in $3\times2$ HalfCheetah in multi-agent MuJoCo.}
		\label{fig:hp_search}
		\vspace{-2mm}
	\end{figure}

	We further investigate the influence of the hyperparameters on MA2QL and the scalability of MA2QL. First, we study the effect of $K$ (the number of Q-network updates at each turn) in the robotic control task: 3-agent Hopper. We consider $K=[4000, 8000, 40000]$. As shown in Figure \ref{fig:ablation}, when $K$ is small, it outperforms IQL but still gets stuck in sub-optimal policies. On the contrary, if $K$ is large, different $K$ affects the efficiency of the learning, but not the final performance.

	\begin{wrapfigure}{!t}{0.6\textwidth}
		\vspace{-4mm}
		\centering
		\includegraphics[width=.48\linewidth]{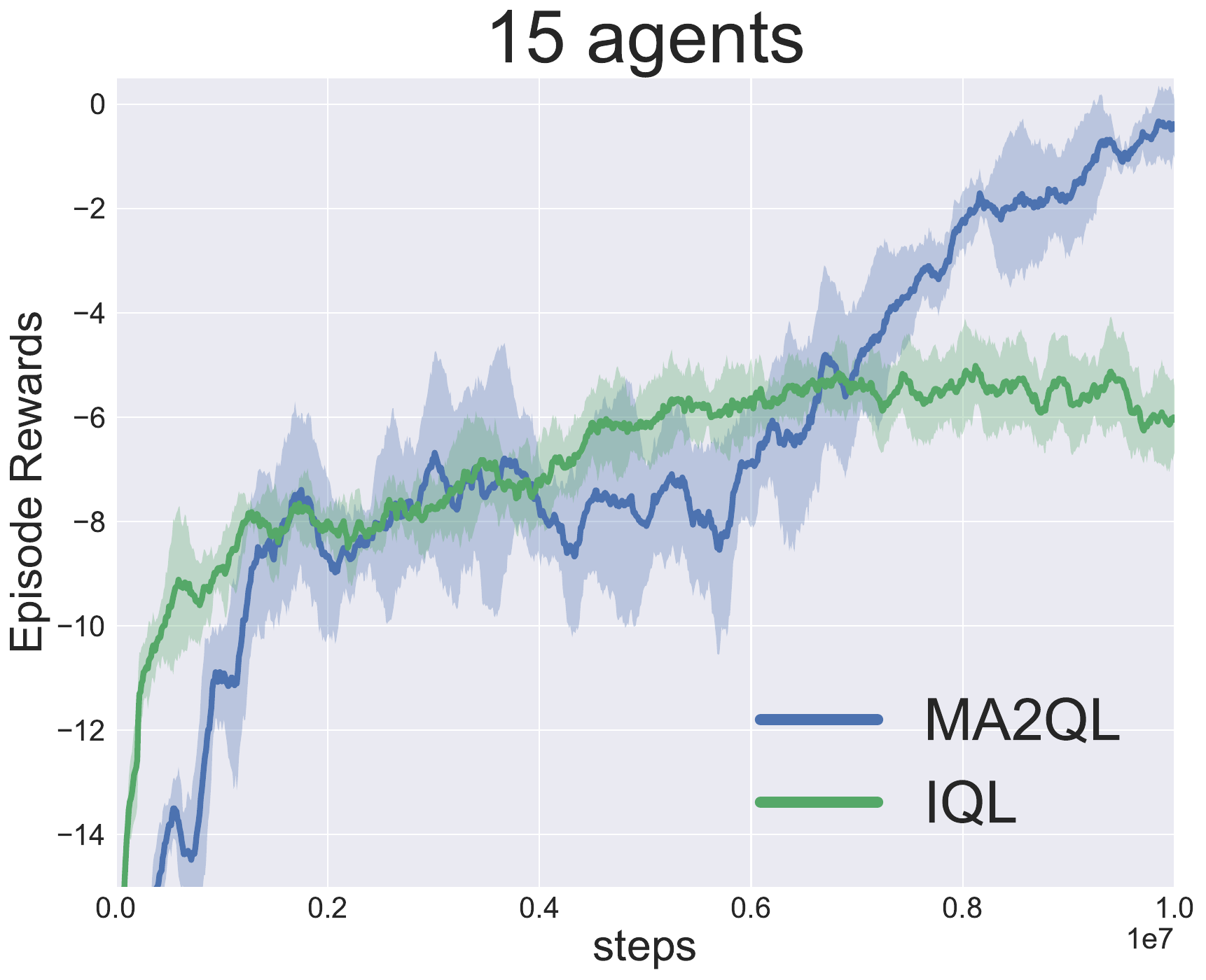}
		\includegraphics[width=.49\linewidth]{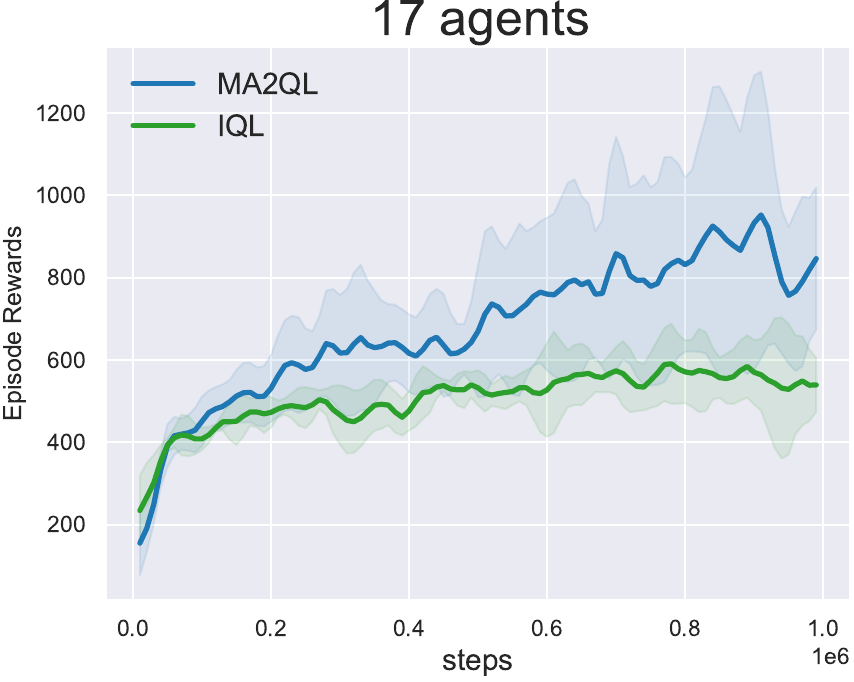}
		\vspace{-2mm}
		\caption{Learning curve of MA2QL compared with IQL in 15-agent simple spread in MPE and in $17\times1$ Humanoid in multi-agent MuJoCo.}
		\label{fig:scale}
		\vspace{-2mm}
	\end{wrapfigure}

    As discussed before, the hyperparameters are the same for IQL and MA2QL, which are well-tuned for IQL. To further study their effect on MA2QL, we conduct additional experiments in simple spread with different learning rates and in HalfCheetah with different batch sizes. As shown in Figure~\ref{fig:mpe_lr} and Figure~\ref{fig:mujoco_bs}, under these different hyperparameters, the performance of IQL and MA2QL varies, but MA2QL consistently outperforms IQL, which can be evidence of the gain of MA2QL over IQL is robust to the hyperparameters of IQL. The default learning rate in MPE is 0.0005 and the default batch size in multi-agent MuJoCo is 100.

	As for scalability, we additionally evaluate MA2QL in 15-agent simple spread in MPE and $17\times1$ Humanoid in multi-agent MuJoCo. As illustrated in Figure~\ref{fig:scale}, MA2QL brings large performance gains over IQL in both tasks. More agents mean the environments become more complex and unstable for decentralized learning. IQL is easy to get stuck by the non-stationarity problem while MA2QL can handle it well. The results show again that alternate learning of MA2QL can sufficiently stabilize the environment during the learning process. It also indicates the good scalability of MA2QL.

	\section{Conclusion and Discussion}
	\label{sec:conclusion}
	
	In the paper, we propose MA2QL, a simple yet effective value-based fully decentralized cooperative MARL algorithm. MA2QL is theoretically grounded and requires minimal changes to independent Q-learning. Empirically, we verify MA2QL in a variety of cooperative multi-agent tasks, including a cooperative stochastic game, MPE, multi-agent MuJoco, and SMAC. The results show that, in spite of such minimal changes, MA2QL outperforms IQL in both discrete and continuous action spaces, fully and partially observable environments.
	
	MA2QL makes minimal changes to IQL but indeed improves IQL. In practice, MA2QL can be easily realized by letting agents follow a pre-defined schedule for learning. However, such a schedule is convenient in practice. MA2QL also has the convergence guarantee, yet is limited to Nash equilibrium in tabular cases. As a Dec-POMDP usually has many Nash equilibria, the converged performance of MA2QL may not be optimal as shown in the stochastic game. Nevertheless, learning the optimal joint policy in fully decentralized settings is still an open problem. In the stochastic game, we see that IQL also converges, though much slower than MA2QL. This indicates that IQL may also have the convergence guarantee under some conditions, which however is not well understood. We believe fully decentralized learning for cooperative MARL is an important and open research area. However, much less attention has been paid to decentralized learning than centralized training with decentralized execution. This work may provide some insights to further studies of decentralized learning.
	
	\bibliography{reference}
	\bibliographystyle{preprint}

	\newpage
	\appendix

	\section{Proofs}
	\label{sec:app:proofs}
	
	\subsection{Proof for Lemma \ref{lemma-varepsilon}}
	\label{sec:app:lemma_var}
	\begin{proof}
		From the definition of $Q^{t,k}_{i}$ \eqref{q-learning}, we have 
		\begin{equation}
			\begin{aligned}
				\big\|Q^{t+1,k}_{i} - Q^{t,k}_{i}\big\|_\infty &  = \big\|\gamma \mathbb{E}_{s'\sim P_i^k}[\operatorname{max}_{a_i^\prime} Q^{t,k}_i(s^\prime,a_i^\prime) - \operatorname{max}_{a_i^\prime} Q^{t-1,k}_i(s^\prime,a_i^\prime)] \big\|_\infty \\
				& \le \gamma \big\|Q^{t,k}_i - Q^{t-1,k}_i \big\|_\infty \le \gamma^t \big\|Q^{1,k}_i - Q^{0,k}_i \big\|_\infty.
			\end{aligned}
		\end{equation}
		Then for any integer $m \ge 1$, we have 
		\begin{equation}
			\begin{aligned}
				\big\|Q^{t+m,k}_{i} - Q^{t,k}_{i}\big\|_\infty & \le \big\|Q^{t+m,k}_{i} - Q^{t+m-1,k}_{i}\big\|_\infty + \cdots + \big\|Q^{t+1,k}_{i} - Q^{t,k}_{i}\big\|_\infty \\
				& \le \gamma^t \frac{1 - \gamma^m}{1-\gamma} \big\| Q^{1,k}_i - Q^{0,k}_i\big\|_\infty.
			\end{aligned}
		\end{equation}
		Let $m \to \infty$, and we have
		\begin{equation}\label{td1}
			\begin{aligned}
				\big\| Q^{*,k}_{i} - Q^{t,k}_{i} \big\|_\infty & \le   \frac{\gamma^t}{1-\gamma}\big\| Q^{1,k}_i - Q^{0,k}_i \big\|_\infty \\
				& \le \frac{\gamma^t}{1-\gamma} \max_{s,a_i} \big| r^k_i(s,a_i) + \gamma \mathbb{E}_{s'\sim P_i^k}[\operatorname{max}_{a_i^\prime} Q^{0,k}_i(s^\prime,a_i^\prime)]  - Q^{0,k}_i(s,a_i) \big|.
			\end{aligned}
		\end{equation}
		\textit{As all agents update by turns}, we have $Q^{0,k}_i = Q^{t^{k-1}_{i},k-1}_{i}$, where $t^{k-1}_{i}$ is the number of Q-iteration for agent $i$ in the $k-1$ round. Therefore, we have 
		\begin{equation}
			\big\|{Q^{0,k}_i - Q^{*,k-1}_{i}\big\|_\infty = \big\| Q^{t^{k-1}_{i},k-1}_{i-1} - Q^{*,k-1}_{i}} \big\|_\infty \le \varepsilon.
			\label{induct-prop}
		\end{equation}
		With this property, we have
		\begin{align}
			& \big| r^k_i(s,a_i) + \gamma \mathbb{E}_{s' \sim P_i^k}[\operatorname{max}_{a_i^\prime}  Q^{0,k}_i(s^\prime,a_i^\prime)]  - Q^{0,k}_i(s,a_i) \big| \nonumber \\
			& =  \big| r^k_i(s,a_i) + \gamma \mathbb{E}_{s' \sim P_i^k}[\operatorname{max}_{a_i^\prime} Q^{0,k}_i(s^\prime,a_i^\prime)] - Q^{*,k-1}_{i}(s,a_i)  + Q^{*,k-1}_{i}(s,a_i) -  Q^{0,k}_i(s,a_i) \big| \nonumber \\
			& \le \big| r^k_i - r^{k-1}_{i} \big| + \gamma \big| \mathbb{E}_{s' \sim P_i^k}[\operatorname{max}_{a_i^\prime} Q^{0,k}_i(s^\prime,a_i^\prime)]  - \mathbb{E}_{s' \sim P_i^{k-1}}[\operatorname{max}_{a_i^\prime} Q^{*,k-1}_{i}(s^\prime,a_i^\prime)] \big| \nonumber \\
			& \quad + \big|Q^{*,k-1}_{i}(s,a_i) - Q^{0,k}_i(s,a_i) \big| \le 2r_{\max} + (\frac{2\gamma r_{\max}}{1-\gamma} + \varepsilon) +\varepsilon =   2R + 2\varepsilon \label{td2},
		\end{align}
		where the second term in the last inequality is from $\|Q^{*,k-1}_{i}\|_\infty \le \frac{r_{\max}}{1-\gamma}$, $\|Q^{0,k}_{i}\|_\infty \le \|Q^{*,k-1}_{i}\|_\infty + \varepsilon$, and \eqref{induct-prop}. Finally, by combining \eqref{td1} and \eqref{td2}, we have 
		\begin{equation}
			\big\|Q^{*,k}_{i} - Q^{t,k}_{i}\big\|_\infty \le \frac{\gamma^t}{1-\gamma}(2R+2\varepsilon).
		\end{equation}
		We need $\|Q^{*,k}_{i} - Q^{t,k}_{i}\|_\infty \le \varepsilon$, which can be guaranteed by 
		$t \ge \frac{\log\left( (1- \gamma) \varepsilon \right) - \log (2R +2\varepsilon)}{\log \gamma}$.    
	\end{proof}
	
	\subsection{Proof for Theorem \ref{theorem-q-iteration}}
	\label{sec:app:q-iteration}
	\begin{proof}
		First, from Lemma \ref{lemma-policy-iteration}, we know $Q^{*,k}_i$ also induces a joint policy improvement, thus $Q^{*,k}_i$ converges to $Q^{*}_i$. Let $\pi_i^*(s)=\arg\max_{a_i}Q^*_i(s,a_i)$, then $\bm{\pi}^*$ is the joint policy of a Nash equilibrium.
		
		Then, we define $\Delta$ as
		\begin{equation}
			\Delta = \min_{s,i} \max_{a_i \not = \pi^*_i(s)} |Q^*_i(s,\pi^*_i(s)) - Q^*_i(s,a_i) |.
		\end{equation}
		From the assumption we know that $\Delta > 0$. We take $\varepsilon = \frac{\Delta}{6}$, and from Lemma \ref{lemma-varepsilon}, we know there exists $k_0$ such that 
		\begin{equation}
			\big\|Q^*_i - Q^{*,k}_i\big\|_\infty \le \varepsilon  \quad \forall k \ge k_0.
		\end{equation}
		For $k\ge k_0$ and any action $a_i \not = \pi^*_i(s)$, we have 
		\begin{equation}
			\begin{aligned}
				Q^k_i(s,& \pi^*_i(s))  - Q^k_i(s,a_i)  \\
				& = Q^k_i(s,\pi^*_i(s)) - Q^{*,k}_i(s,\pi^*_i(s))   + Q^{*,k}_i(s,\pi^*_i(s)) - Q^{*}_i(s,\pi^*_i(s))   \\
				& \quad + Q^{*}_i(s,\pi^*_i(s)) -  Q^{*}_i(s,a_i)    +  Q^{*}_i(s,a_i) -  Q^{*,k}_i(s,a_i)  + Q^{*,k}_i(s,a_i) - Q^k_i(s,a_i)  \\
				& \ge Q^{*}_i(s,\pi^*_i(s)) - Q^{*}_i(s,a_i)    - |Q^k_i(s,a_i) - Q^{*,k}_i(s,a_i)|  - |Q^{*,k}_i(s,a_i) - Q^{*}_i(s,a_i)|\\
				& \quad  - |Q^{*}_i(s,\pi^*_i(s)) - Q^{*,k}_i(s,\pi^*_i(s))| - |Q^{*,k}_i(s,\pi^*_i(s)) -Q^k_i(s,\pi^*_i(s))| \\
				& = \Delta - 4\varepsilon = \Delta/3 > 0,
			\end{aligned}
		\end{equation}
		which means $\pi^k_i(s) = \arg\max_{a_i} Q^k_i(s,a_i) = \arg\max_{a_i} Q^*_i(s,a_i) =  \pi^*_i(s)$. Thus, $Q_i^{k}$ of each agent $i$ induces $\pi_i^*$ and all together induce $\bm{\pi}^*$, which the joint policy of a Nash equilibrium.
	\end{proof}
	
	\section{Experimental Settings}
	\label{app:exp}
	
	\subsection{MPE}
	
	The three tasks are built upon the origin MPE \citep{lowe2017multi} (MIT license) and \citet{agarwal2020learning} (MIT license) with sparse reward, more agents, and larger space. These tasks are more difficult than the original ones. In the original tasks, we found IQL and MA2QL perform equally well, which corroborates the good performance of IQL on simple MPE tasks \citep{papoudakis2021benchmarking}. Specifically, we change the dense reward setting to the sparse reward setting that will be described below. 
	We increase the number of agents from 3 to 5 in the task of simple spread and enlarge the range of initial positions from [-1, 1] to [-2, 2]. 
	A larger space would require the agents to explore more and the agents would be easier to get stuck in sub-optimum.
	As there is no distance signal in the sparse reward setting, we add a boundary penalty into the environment. If the agents move outside the boundaries, the environment would return the negative reward of -1. The boundaries in our experiment are [-3, 3].
	
	\textbf{Simple spread.} In this task, the agents need to find some landmarks in a 2D continuous space. In the sparse reward setting, only if the agents cover the landmarks, the environment would return the positive reward. There is no reward after the landmarks are covered. 
	Whether the landmark is covered by the agent is based on the distance between the landmark and the agent. We set a threshold of 0.3 in our experiment. If the distance is smaller than the threshold, we consider the landmark is covered by the agent.
	We set 5 agents and 4 landmarks in this task. Fewer landmarks require the agents to do more exploration in the space to find the landmarks.
	
	\textbf{Line control.} In this task, there are 5 agents in the space. The goal of the task is that the agents need to arrange themselves in a straight line. 
	The reward depends on the number of agents in the straight line. 
	If the straight line is formed by the agents of 5, the environment would return the reward of 5.
	Since the action space is discrete, we set a threshold of 15$^\circ$ to judge whether it is a line.
	
	\textbf{Circle control.} There are 7 agents and 1 landmark in the space. This task is similar to the task of line control, while the agents are asked to form a circle where the landmark is the center. We also set a threshold of 0.15.
	
	\subsection{Multi-Agent MuJoCo}
	
	Multi-agent MuJoCo \citep{peng2021facmac} (Apache-2.0 license) is built upon the single-agent MuJoCo \citep{todorov2012mujoco}. A robot can be considered as the combination of joints and body segments. 
	These joints and body segments in a robot can be regarded as the vertices and connected edges in a graph. We can divide a graph into several disjoint sub-graphs and a sub-graph can be considered as an agent.
	We can design new observation and action spaces for each agent based on the division. 
	For example, the robot of Walker2d has 6 joints in total. We can divide joints and body segments into left parts and right parts as two agents.
	
	The details about our experiment settings in multi-agent Mujoco are listed in Table \ref{tab:mujoco}. The ``agent obsk'' defines the number of nearest agents an agent can observe. Thus, HalfCheetah, Hopper, and Humanoid are fully observable, while Walker2D, Swimmer, and Ant are partially observable.
	
	\begin{table}[t]
		\centering
		\caption{The environment settings of multi-agent MuJoCo}
		\label{tab:mujoco}
		\vspace{-2mm}
		\small
		\begin{tabular}{c|c|c}
			task  & agent obsk & observation\\
			\midrule
			$2\times3$ HalfCheetah & $1$ & full\\
			$3\times1$ Hopper  & $2$ & full\\
			$3\times2$ Walker2d  & $1$ & partial\\
			$17\times1$ Humanoid & $16$ & full\\
			$2\times4$ Swimmer & $0$ & partial\\
			$6|2$ Ant & $0$ & partial\\
		\end{tabular}
	\end{table}
	
	\section{Training Details}
	\label{app:hyper}
	
	\subsection{MPE}
	
	We use DRQN with an RNN layer and MLP as the architecture of Q-network. The RNN layer is composed of a GRU with the 64-dimension hidden state. We utilize ReLU non-linearities. 
	All networks are trained with a batch size of 128 and Adam optimizer with the learning rate of 0.0005. There are 16 parallel environments that collect transitions and the replay buffer with the size of 5000 contains trajectories. For exploration, we use $\epsilon$-greedy policy and $\epsilon$ starts from 1.0 and decreases linearly to 0.1 in the first 50000 timesteps.
	For MA2QL, we define $K$ is the training steps for updating an agent at each turn. For instance, it is the turn to update the agent $i$. After training $K$ steps, it would change the turn to update the agent $i+1$. We set $K$ as 30 in this experiment.

	\subsection{Multi-Agent MuJoCo}
	
	We use DDPG for continuous action space and the setting is similar to the one used in OpenAI Spinning Up \citep{SpinningUp2018} (MIT license). The architecture of both actor network and critic network is MLP with two hidden layers with 256-dimension hidden state. 
	We utilize ReLU non-linearities except the final output layer of the actor network. The activation function of the final output layer of the actor network is tanh function that can bound the actions.
	All networks are trained with a batch size of 100 and Adam optimizer with the learning rate of 0.001.
	There is only one environment that collects transitions and the replay buffer contains the most recent $10^6$ transitions.
	For exploration, the uncorrelated, mean-zero Gaussian noise with $\sigma=0.1$ is used.
	For MA2QL, we set $K$ as 8000. 
	Unlike other two environments where agents learn on trajectories, in multi-agent MuJoCo, agents learns on experiences. Thus, in each turn, the agent can be updated every environment step and hence $K$ is set to be large.

	\subsection{SMAC}
	
	Our code is based on the \citet{hu2021rethinking} (Apache-2.0 license) and the version of SMAC is SC2.4.10. 
	The architecture of Q-network is DRQN with an RNN layer and MLP. The RNN layer is composed of a GRU with the 64-dimension hidden state.
	All networks are trained with a batch size of 32 and Adam optimizer with the learning rate of 0.0005. 
	All target networks are updated every 200 training steps. 
	There are 2 parallel environments that collect transitions and the replay buffer with the size of 5000 contains trajectories. For exploration, we use $\epsilon$-greedy policy and $\epsilon$ starts from 1.0 and decreases linearly to 0.05 in the first 100000 timesteps.
	We add the action of last timestep into the observation as the input to the network.
	For MA2QL, we set $K$ as 50.

	\subsection{Resources}
	
	We perform the whole experiment with a total of ten Tesla V100 GPUs.
	
	\section{Additional Results}
	\label{app:results}

	\subsection{SMAC}
	
	We also demonstrate the results on more maps in SMAC. As shown in Figure~\ref{fig:smac2}, IQL shows strong performance in these maps, which corroborates the good performance of IQL in SMAC \citep{papoudakis2021benchmarking}. MA2QL performs similarly to IQL on the maps of $\mathtt{3s5v}$ and $\mathtt{8m}$, but better on the map $\mathtt{2c\_vs\_64zg}$. Together with the results in Section~\ref{sec:smac}, we can see that although IQL performs well in SMAC, MA2QL can still bring performance gain in more difficult maps, which again verifies the superiority of MA2QL over IQL. 
	
	\begin{figure}[t]
		\centering
		\setlength{\abovecaptionskip}{5pt}
		\includegraphics[width=0.85\textwidth]{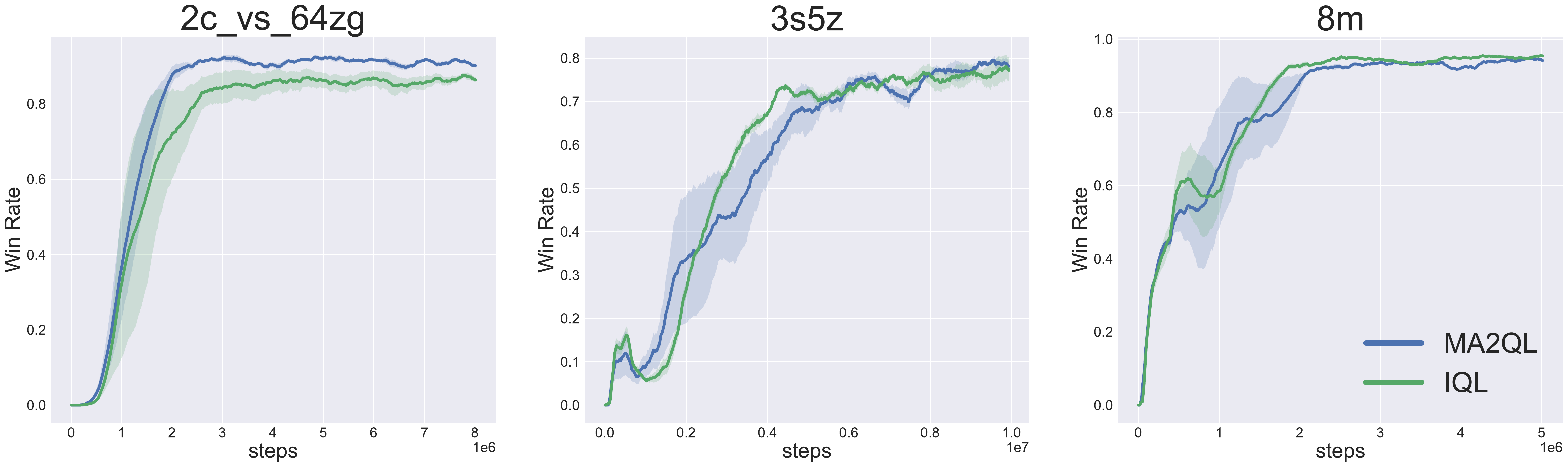}
		\caption{Learning curve of MA2QL compared with IQL on $\mathtt{2c\_vs\_64zg}$ (hard), $\mathtt{3s5v}$ (hard) and $\mathtt{8m}$ (easy) in SMAC, where x-axis is environment steps. }
		\label{fig:smac2}
	\vspace{-2mm}
	\end{figure}
	
	\subsection{Multi-Agent MuJoCo}
	
	\begin{figure}[t]
		\centering
		\setlength{\abovecaptionskip}{5pt}
		\includegraphics[width=0.85\textwidth]{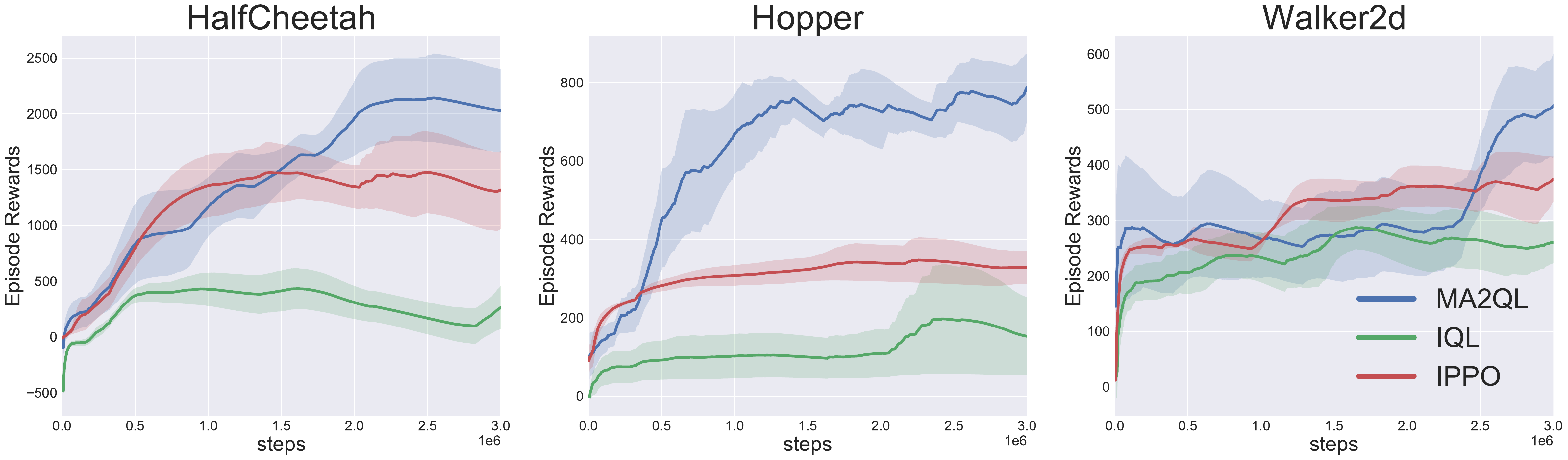}
		\caption{Learning curve of MA2QL compared with IQL and IPPO in $2\times3$ HalfCheetah, $3\times1$ Hopper and $3\times 2$ Walker2d in multi-agent MuJoCo, where x-axis is environment steps. }
		\label{fig:mujoco2}
		\vspace{-2mm}
	\end{figure}
	
	As MA2QL is derived from Q-learning and thus a value-based method, we focus on the comparison with IQL. Here we provide the results additionally compared with an actor-critic method, \textit{i.e.}, independent PPO (IPPO) \citep{de2020independent}, in the experiment of multi-agent MuJoCo.  The setting and the architecture of IPPO are also similar to the one used in OpenAI Spinning Up \citep{SpinningUp2018} (MIT license). As illustrated in Figure~\ref{fig:mujoco2}, MA2QL also consistently outperforms IPPO.

	\begin{figure}[t]
		\centering
		\begin{minipage}{.58\textwidth}
			\centering
			\includegraphics[width=.48\linewidth]{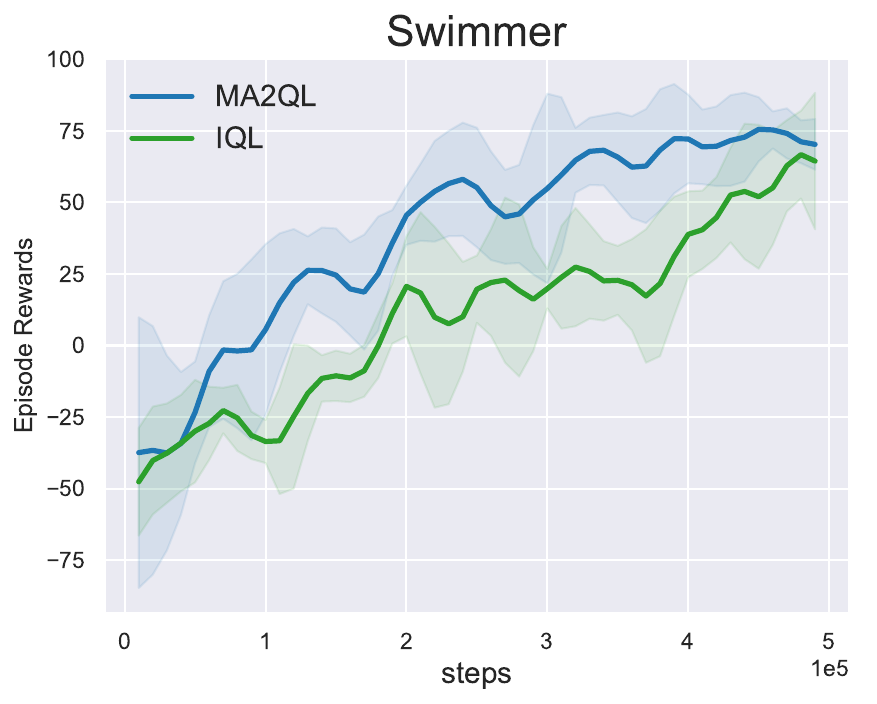}
			\includegraphics[width=.485\linewidth]{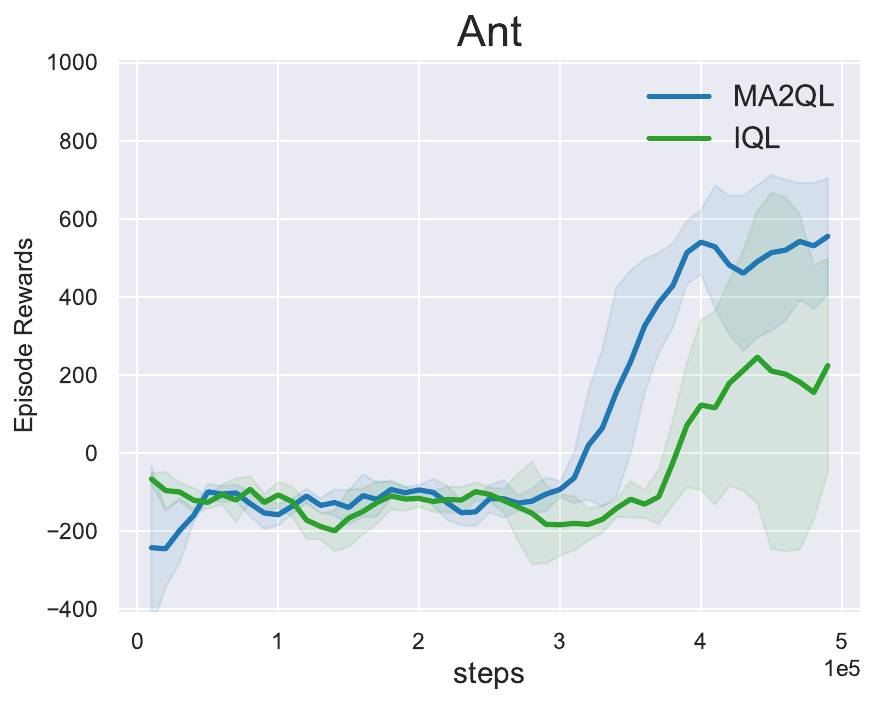}
			\vspace{-3mm}
			\caption{Learning curve of MA2QL compared with IQL in $2 \times 4$ Swimmer and $6|2$ Ant in multi-agent MuJoCo, where x-axis is environment steps.}
			\label{fig:rebuttal_mujoco}
		\end{minipage}
		\hspace{4mm}
		\begin{minipage}{.36\textwidth}
			\centering
			\includegraphics[width=.98\textwidth]{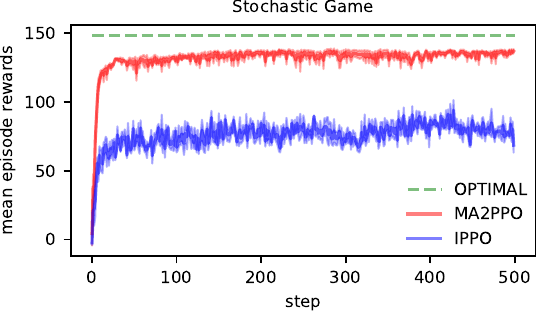}
			\vspace{-2mm}
			\caption{Learning curves of MA2PPO compared with IPPO in the cooperative stochastic game.}
			\label{fig:ma2pppo}
		\end{minipage}
\vspace{-2mm}
	\end{figure}	
	
	Moreover, we add two partially observable tasks in multi-agent MuJoCo, $2 \times 4$ Swimmer and $6|2$ Ant. The results are shown in Figure~\ref{fig:rebuttal_mujoco}, and MA2QL also outperforms IQL in these tasks.

	\subsection{Alternate Update of IPPO}
	
	The principle of alternate update can also be applied to independent actor-critic methods. Here we additionally provide the empirical results of alternate update of IPPO (termed as MA2PPO) in the cooperative stochastic game. As illustrated in Figure~\ref{fig:ma2pppo}, MA2PPO substantially outperforms IPPO. This result may suggest that such a simple principle can also be applied to independent actor-critic methods for better performance. A more thorough investigation on independent actor-critic methods is beyond the scope of this paper and left as future work.

	\subsection{Matrix Game}
	
	We also compare MA2QL with IQL in a matrix game, of which the payoff matrix is illustrated in Figure~\ref{fig:payoff}. The learning curves of MA2QL and IQL are shown in Figure~\ref{fig:matrix_game}, where left is linearly decayed exploration $\epsilon$ and right is $\epsilon=1$. MA2QL and MA2QL with $\epsilon=1$ both learn the optimal joint action $AA$ while IQL learns the suboptimal joint action $CC$. For both MA2QL and MA2QL with $\epsilon=1$, the learned Q-table of two agents is the same, which is [$11$, $-30$, $0$] for action [$A$, $B$, $C$] respectively. For IQL, the Q-table of $a_1$ is [$-2.2$, $1.9$, $6.9$], and the Q-table of $a_2$ is [$-2.1$, $-2.7$, $6.9$]. For IQL with $\epsilon=1$, the Q-table of $a_1$ is [$-6.3$, $-5.6$, $2.3$], the Q-table of $a_2$ is [$-6.3$, $-7.7$, $4.3$]. Although the learned joint action in both cases is $CC$, the value estimate of $CC$ is less accurate for IQL with $\epsilon=1$. Meanwhile, MA2QL has the same accurate value estimate regardless of exploration, which corroborates the good performance of MA2QL under different exploration schemes in the didactic game.

	\newcolumntype{P}[1]{>{\centering\arraybackslash}p{#1}}
	\begin{figure}[t]
		\setlength{\abovecaptionskip}{5pt}
		\renewcommand{\arraystretch}{1.35}
		\setlength{\tabcolsep}{-1pt}
		\centering
		\begin{subfigure}[b]{0.26\linewidth}
			\centering
			\begin{adjustbox}{width=.87\textwidth}
				\begin{tabular}{|P{1cm}||P{0.9cm}|P{0.9cm}|P{0.9cm}|}
					\hline
					\diagbox{$a_1$}{$a_2$} & $A$ & $B$ & $C$ \\ \hline\hline
					$A$ & $11$ & {$-30$} & {$0$}  \\ \hline
					$B$ & {$-30$} & {$7$} & {$6$}  \\ \hline
					$C$ & {$0$} & {$0$} & {$7$}  \\ \hline
				\end{tabular}
			\end{adjustbox}
			\vspace{3mm}
			\caption{payoff matrix}
			\label{fig:payoff}
		\end{subfigure}
		\begin{subfigure}[b]{0.71\linewidth}
			\setlength{\abovecaptionskip}{2pt}
			\centering
			\includegraphics[width=\textwidth]{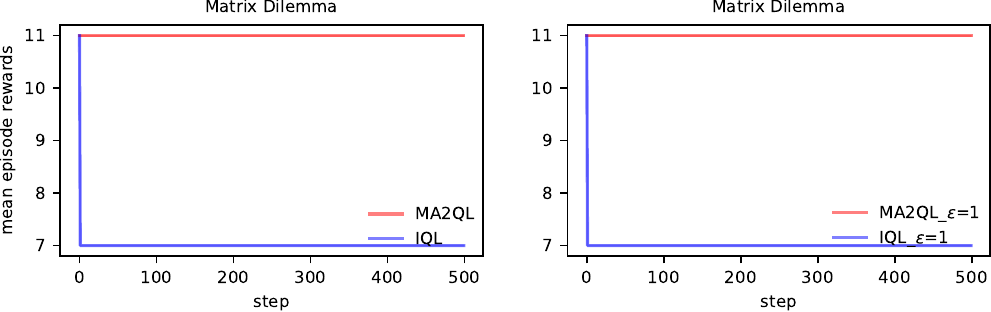}
			\captionof{figure}{learning curves}
			\label{fig:matrix_game}
		\end{subfigure}
		\caption{A matrix game: \subref{fig:payoff} payoff matrix; \subref{fig:matrix_game} learning curves of MA2QL and IQL with linearly decayed $\epsilon$ (left) and $\epsilon=1$ (right).}
		\label{fig:e_matrix_game}
	\end{figure}

\end{document}